\newcommand{\R}{\mathbb{R}}
\newcommand{\N}{\mathcal{N}}
\newcommand{\cL}{\mathcal{L}}
\newcommand{\svert}{~|~}
\newcommand{\td}{\text{d}}
\newcommand{\f}{\mathbf{f}}
\newcommand{\x}{\mathbf{x}}
\newcommand{\Bb}{\mathbf{b}}
\newcommand{\sBb}{\mathtt{z}}
\newcommand{\y}{\mathbf{y}}
\newcommand{\w}{\mathbf{w}}
\newcommand{\W}{\mathbf{W}}
\newcommand{\m}{\mathbf{m}}
\newcommand{\bL}{\mathbf{L}}
\newcommand{\X}{\mathbf{X}}
\newcommand{\Y}{\mathbf{Y}}
\newcommand{\F}{\mathbf{F}}
\newcommand{\I}{\mathbf{I}}
\newcommand{\M}{\mathbf{M}}
\newcommand{\bz}{\mathbf{0}}
\newcommand{\bepsilon}{\text{\boldmath$\epsilon$}}
\newcommand{\bo}{\text{\boldmath$\omega$}}
\newcommand{\bsigma}{\text{\boldmath$\sigma$}}
\newcommand{\bSigma}{\text{\boldmath$\Sigma$}}
\newcommand{\bmu}{\text{\boldmath$\mu$}}
\newcommand{\bphi}{\text{\boldmath$\phi$}}
\newcommand{\K}{\mathbf{K}}
\newcommand{\Kh}{\widehat{\mathbf{K}}}
\newcommand{\Cov}{\text{Cov}}
\newcommand{\tr}{\text{tr}}
\newcommand{\diag}{\text{diag}}
\newcommand{\KL}{\text{KL}}
\newcommand{\bc}{\mathbf{c}}
\newcommand{\weightdecay}{\lambda}
\newtheorem{proposition}{Proposition}
\theoremstyle{definition}
\def\fl[#1\]{\begin{align}#1\end{align}}
\def\[#1\]{\begin{align*}#1\end{align*}}
\def\*[#1\]{\begin{align*}#1\end{align*}}
\titlespacing{\section}{0pt}{1ex}{0.8ex}
\titlespacing{\subsection}{0pt}{0.5ex}{0ex}
\titlespacing{\subsubsection}{0pt}{0.2ex}{0ex}
\def\expandafter\normalsize\expandafter{%
    \normalsize
    \setlength\abovedisplayskip{2pt}
    \setlength\belowdisplayskip{2pt}
    \setlength\abovedisplayshortskip{0pt}
    \setlength\belowdisplayshortskip{0pt}
}
\title{Dropout as a Bayesian Approximation: \\
Appendix
}
\author{
\And
Yarin Gal 
\And
\\
University of Cambridge\\
\texttt{\{yg279,zg201\}@cam.ac.uk} 
\And
Zoubin Ghahramani
}
\begin{document}

\maketitle

\begin{abstract} 
%Deep learning works very well in practice for many tasks, but it lacks the ability to reason about uncertainty over the features. Features extracted from a dataset are given as point estimates, and do not capture how much the model is confident in its estimation. This is in contrast to probabilistic Bayesian models, which allow reasoning about model confidence, but often with the price of diminished performance. 

We show that a neural network with arbitrary depth and non-linearities, with dropout applied before every weight layer, is mathematically equivalent to an approximation to a well known Bayesian model. This interpretation might offer an explanation to some of dropout's key properties, such as its robustness to over-fitting. Our interpretation allows us to reason about uncertainty in deep learning, and allows the introduction of the Bayesian machinery into existing deep learning frameworks in a principled way. %Our analysis suggests straightforward generalisations of dropout for future research which should improve on current techniques.

This document is an appendix for the main paper ``Dropout as a Bayesian Approximation: 
Representing Model Uncertainty in Deep Learning'' by Gal and Ghahramani, 2015 (\url{http://arxiv.org/abs/1506.02142}).
\end{abstract} 

\section{Introduction}

Deep learning works very well in practice for many tasks, ranging from image processing \citep{krizhevsky2012imagenet} to language modelling \citep{bengio2006neural}. However the framework has some major limitations as well. Our inability to reason about uncertainty over the features is an example of such -- the features extracted from a dataset are often given as point estimates. These do not capture how much the model is confident in its estimation. 
%In classification we can tell whether a model is certain about predictions based on the predictive probabilities obtained at the end of the pipeline. Softmax is often used to normalise the output features, and the resulting probabilities are often interpreted as model prediction confidence \red{counter example in new paper}. A model with two classes with probabilities 0.5 for example would be interpreted as being confused between the two. But for the task of regression, or to use the features from one model for another task (such as is often used with word embeddings), we are left with point estimates alone. 
On the other hand, probabilistic Bayesian models such as the Gaussian process \citep{Rasmussen2005Gaussian} offer us the ability to reason about our confidence. But these often come with a price of lessened performance. 

Another major obstacle with deep learning techniques is over-fitting. This problem has been largely answered with the introduction of dropout \citep{hinton2012improving,srivastava2014dropout}. Indeed many modern models use dropout to avoid over-fitting in practice. Over the last several years many have tried to explain why dropout helps in avoiding over-fitting, a property which is not often observed in \textit{Bayesian models}. Papers such as \citep{wager2013dropout,baldi2013understanding} have suggested that dropout performs stochastic
gradient descent on a regularised error function, or is equivalent to an $L_2$ regulariser applied after scaling the features by some estimate.
%In [recent dropout journal paper] it was suggested that dropout can be seen as related to the Bayesian neural network (?) But without further explanations.

Here we show that a deep neural network (NN) with arbitrary depth and non-linearities, with dropout applied before every weight layer, is mathematically equivalent to an approximation to the probabilistic deep Gaussian process model \citep{damianou2013deep} (marginalised over its covariance function parameters). We would like to stress that no simplifying assumptions are made on the use of dropout in the literature, and that the results derived are applicable to any network architecture that makes use of dropout exactly as it appears in practical applications. We show that the dropout objective, in effect, minimises the Kullback--Leibler divergence between an approximate distribution and the posterior of a deep Gaussian process (marginalised over its finite rank covariance function parameters). 

We survey possible applications of this new interpretation, and discuss insights shedding light on dropout's properties. 
This interpretation of dropout as a Bayesian model offers an explanation to some of its properties, such as its ability to avoid over-fitting. Further, our insights allow us to treat NNs with dropout as fully Bayesian models, and obtain uncertainty estimates over their features. In practice, this allows the introduction of the Bayesian machinery into existing deep learning frameworks in a principled way.
Lastly, our analysis suggests straightforward generalisations of dropout for future research which should improve on current techniques.

The work presented here is an extensive theoretical treatment of the above, with applications studied separately. 
%%%%% TODO: ref to separate doc
We next review the required background, namely dropout, Gaussian processes, and variational inference. We then derive the main results of the paper. We finish with insights and applications, and discuss how various dropout variants fit within our framework.

%\section{Literature Survey}
%
%We survey recent literature published since the submission of this work.

\section{Background}

%section 2 , Bayesian modelling- Bayes rule? Variational inference, Gaussian processes

We review dropout, the Gaussian process model\footnote{For a full treatment of Gaussian processes, see \citet{Rasmussen2005Gaussian}.}, and approximate variational inference quickly. These tools will be used in the following section to derive the main results of this work. We use the following notation throughout the paper. Bold lower case letters ($\x$) denote vectors, bold upper case letters ($\X$) denote matrices, and standard weight letters ($x$) denote scalar quantities. We use subscripts to denote either entire rows / columns (with bold letters, $\x_i$), or specific elements ($x_{ij}$). We use subscripts to denote variables as well (such as $\W_1: Q \times K, \W_2: K \times D$), with corresponding lower case indices to refer to specific rows / columns ($\w_q,\w_k$ for the first variable and $\w_k,\w_d$ for the second).
We use a second subscript to denote the element index of a specific variable: $w_{1,qk}$ denotes the element at row $q$ column $k$ of the variable $\W_1$.

\subsection{Dropout} \label{sect:dropout}
We review the dropout NN model \citep{hinton2012improving,srivastava2014dropout} quickly for the case of a \textit{single hidden layer} NN. This is done for ease of notation, and the generalisation to multiple layers is straightforward. Denote by $\W_1, \W_2$ the weight matrices connecting the first layer to the hidden layer and connecting the hidden layer to the output layer respectively. These linearly transform the layers' inputs before applying some element-wise non-linearity $\sigma(\cdot)$.
Denote by $\Bb$ the biases by which we shift the input of the non-linearity. 
We assume the model to output $D$ dimensional vectors while its input is $Q$ dimensional vectors, with $K$ hidden units. Thus $\W_1$ is a $Q \times K$ matrix, $\W_2$ is a $K \times D$ matrix, and $\Bb$ is a $K$ dimensional vector. A standard NN model would output $\widehat{\y} = \sigma(\x \W_1 + \Bb) \W_2$ given some input $\x$.\footnote{Note that we omit the outer-most bias term as this is equivalent to centring the output.}

Dropout is applied by sampling two binary vectors $\sBb_1, \sBb_2$ of dimensions $Q$ and $K$ respectively. The elements of the vectors are distributed according to a Bernoulli distribution with some parameter $p_i \in [0,1]$ for $i = 1,2$. Thus $\sBb_{1,q} \sim \text{Bernoulli}(p_1)$ for $q = 1,...,Q$, and $\sBb_{2,k} \sim \text{Bernoulli}(p_2)$ for $k = 1, ..., K$. Given an input $\x$, $1 - p_1$ proportion of the elements of the input are set to zero: $\x \circ \sBb_1$ where $\circ$ signifies the Hadamard product.
The output of the first layer is given by $\sigma((\x \circ \sBb_1)\W_1 + \Bb) \circ \sBb_2$, which is linearly transformed to give the dropout model's output $\widehat{\y} = \big( (\sigma((\x \circ \sBb_1)\W_1 + \Bb)) \circ\sBb_2 \big) \W_2$. This is equivalent to multiplying the weight matrices by the binary vectors to zero out entire rows: 
$$\widehat{\y} = \sigma(\x (\sBb_1 \W_1) + \Bb)(\sBb_2 \W_2).$$
The process is repeated for multiple layers. Note that to keep notation clean we will write $\sBb_1$ when we mean $\diag(\sBb_1)$ with the $\diag(\cdot)$ operator mapping a vector to a diagonal matrix whose diagonal is the elements of the vector. 

To use the NN model for regression we might use the Euclidean loss (also known as ``square loss''),
\begin{align} \label{eq:NN:reg}
E = \frac{1}{2N}
\sum_{n=1}^N ||\y_n - \widehat{\y}_n||^2_2
\end{align}
where $\{\y_1, \hdots, \y_N\}$ are $N$ observed outputs, and $\{\widehat{\y}_1, \hdots, \widehat{\y}_N\}$ being the outputs of the model with corresponding observed inputs $\{ \x_1, \hdots, \x_N \}$. 

To use the model for classification, predicting the probability of $\x$ being classified with label $1,...,D$, we pass the output of the model $\widehat{\y}$ through an element-wise softmax function to obtain normalised scores: $\widehat{p}_{nd} = \exp(\widehat{y}_{nd}) / \left(\sum_{d'} \exp(\widehat{y}_{nd'})\right)$. Taking the log of this function results in a \textit{softmax} loss,
\begin{align} \label{eq:NN:class}
E = -\frac{1}{N} \sum\limits_{n=1}^N \log(\widehat{p}_{n,c_n})
\end{align}
where $c_n \in [1, 2, ..., D]$ is the observed class for input $n$.

During optimisation a regularisation term is often added.
We often use $L_2$ regularisation weighted by some weight decay $\weightdecay$ (alternatively, the derivatives might be scaled), resulting in a minimisation objective (often referred to as cost),
\begin{align} \label{eq:L:dropout}
\cL_{\text{dropout}} := E + \weightdecay_1 ||\W_1||^2_2 + \weightdecay_2 ||\W_2||^2_2 + \weightdecay_3 ||\Bb||^2_2.
\end{align}
We sample new realisations for the binary vectors $\sBb_i$ for every input point and every forward pass thorough the model (evaluating the model's output), and use the same values in the backward pass (propagating the derivatives to the parameters). 

%\red{$\cL_{\text{dropout}}$ is often approximated with a mini-batch objective, where random subsets $B$ of a certain size are uniformly sampled from the dataset,
%\begin{align*}
%\cL^i_{\text{dropout}} = r_{1,i} E(B) + \weightdecay \frac{|B|}{N} \big( ||\W_1||^2_2 + ||\W_2||^2_2 + ||\Bb||^2_2 \big),
%\end{align*}
%where $E(B)$ denotes $E$ evaluated on dataset $B$, and $r_{1,i}$ changes according to some learning policy such that $\sum_{i=1}^{N/|B|} r_{1,i} = \reg$ and $\cL_{\text{dropout}} = \mathbb{E}_{B}[\sum_{i=1}^{N/|B|} \cL^i_{\text{dropout}}]$.}

The dropped weights $\sBb_1\W_1$ and $\sBb_2\W_2$ are often scaled by $\frac{1}{p_i}$ to maintain constant output magnitude. At test time no sampling takes place. This is equivalent to initialising the weights $\W_i$ with scale $\frac{1}{p_i}$ with no further scaling at training time, and at test time scaling the weights $\W_i$ by $p_i$. %Note that the probabilities $p_i$ can be optimised. 

We will show that equations \eqref{eq:NN:reg} to \eqref{eq:L:dropout} arise in Gaussian process approximation as well. But first, we introduce the Gaussian process model.

\subsection{Gaussian Processes}

The Gaussian process (GP) is a powerful tool in statistics that allows us to model distributions over functions. It has been applied in both the supervised and unsupervised domains, for both regression and classification tasks \citep{Rasmussen2005Gaussian,titsias2010bayesian,Gal2015Latent}. The Gaussian process offers desirable properties such as uncertainty estimates over the function values, robustness to over-fitting, and principled ways for hyper-parameter tuning. The use of \textit{approximate variational inference} for the model allows us to scale it to large data via stochastic and distributed inference \citep{hensman2013Gaussian,Gal2014DistributedB}.

Given a training dataset consisting of $N$ inputs $\{ \x_1, \hdots, \x_N \}$ and their corresponding outputs $\{\y_1, \hdots, \y_N\}$, we would like to estimate a function $\y = \f(\mathbf{x})$ that is likely to have generated our observations. We denote the inputs $\X \in \R^{N \times Q}$ and the outputs $\Y \in \R^{N \times D}$. 
 
%This is often written in matrix form for convenience:
%\begin{align}
%X \in \mathbb{R}^{N \times Q} \\
%F \in \mathbb{R}^{N \times D} \\
%F_i = \mathbf{f}(X_i)
%\end{align}
%In our notation we will use capitals for matrices and subscripted lower case letter for rows, i.e.\ a single data point. $\f_i$ denotes then the function value of the $i$'th data point, while $F$ denotes the matrix of all given function values. Additionally, functions over matrices will return a matrix of the function evaluated on each row vector.
What is a function that is likely to have generated our data? Following the Bayesian approach we would put some \textit{prior} distribution over the space of functions $p(\f)$. This distribution represents our prior belief as to which functions are more likely and which are less likely to have generated our data. We then look for the \textit{posterior} distribution over the space of functions given our dataset $(\X, \Y)$:
$$
p(\f | \X, \Y) \propto p(\Y | \X, \f) p(\f).
$$
This distribution captures the most likely functions given our observed data.

By modelling our distribution over the space of functions with a Gaussian process we can analytically evaluate its corresponding posterior in regression tasks, and estimate the posterior in classification tasks. 
In practice what this means is that for regression we place a joint Gaussian distribution over all function values,
\begin{align} \label{eq:generative_model_reg}
\F \svert \X &\sim \N(\bz, \K(\X, \X)) \\
\Y \svert \F &\sim \N(\F, \tau^{-1} \I_N) \notag
\end{align}
with some precision hyper-parameter $\tau$ and where $\I_N$ is the identity matrix with dimensions $N \times N$. For classification we sample from a categorical distribution with probabilities given by passing $\Y$ through an element-wise softmax, 
\begin{align} \label{eq:generative_model_class}
\F \svert \X &\sim \N(\bz, \K(\X, \X)) \\
\Y \svert \F &\sim \N(\F, 0 \cdot \I_N) \notag\\
c_n \svert \Y &\sim \text{Categorical}\left( \exp(y_{nd}) / \left(\sum_{d'} \exp(y_{nd'})\right) \right) \notag
\end{align}
for $n = 1, ..., N$ with observed class label $c_n$. Note that we did not simply write $\Y = \F$ because of notational convenience that will allow us to treat regression and classification together.

To model the data we have to choose a covariance function $\K(\X_1, \X_2)$ for the Gaussian distribution. This function defines the (scalar) similarity between every pair of input points $\K(\mathbf{x}_i, \mathbf{x}_j)$. Given a finite dataset of size $N$ this function induces an $N \times N$ covariance matrix which we will denote $\K := \K(\X, \X)$. For example we may choose a stationary squared exponential covariance function. We will see below that certain non-stationary covariance functions correspond to \textit{TanH} (hyperbolic tangent) or \textit{ReLU} (rectified linear) NNs. 

%We would like to find the probability $P(\Y | \X)$ with the assumption that the function generating $\Y$ is drawn from a Gaussian process:

Evaluating the Gaussian distribution above involves an inversion of an $N$ by $N$ matrix, an operation that requires $\mathcal{O}(N^3)$ time complexity. Many approximations to the Gaussian process result in a manageable time complexity. 
Variational inference can be used for such, and will be explained next.

%It may be the case that we can only obtain noisy evaluations of the function. In this case we introduce a new variable $Y$ containing the noisy observations, making the function values $F$ latent. We assume that the noise on each observation is i.i.d Gaussian, with noise precision $\beta$,
%\begin{align}
%p(Y|F) = 
%\frac{\exp\left( -\frac{\beta}{2} \Tr \left[ (Y - F)^{T}(Y - F) \right] \right)}{(2\pi\beta^{-1})^{nd/2}}.
%\end{align}

%We will assume reasonable familiarity with GPs and the expressions for their predictive distributions and marginal likelihoods for the rest of the tutorial \citep{Rasmussen2005Gaussian}.

\subsection{Variational Inference}

To approximate the model above we could condition the model on a finite set of random variables $\bo$. We make a modelling assumption and assume that the model depends on these variables alone, making them into sufficient statistics in our approximate model.

The predictive distribution for a new input point $\x^*$ is then given by 
$$
p(\y^* | \x^*, \X, \Y) = \int p(\y^* | \x^*, \bo) p(\bo | \X, \Y)\ \td \bo,
$$
with $\y^* \in \R^{D}$.
The distribution $p(\bo | \X, \Y)$ cannot usually be evaluated analytically. Instead we define an approximating \textit{variational} distribution $q(\bo)$, whose structure is easy to evaluate.

We would like our approximating distribution to be as close as possible to the posterior distribution obtained from the full Gaussian process. We thus minimise the Kullback--Leibler (KL) divergence, intuitively a measure of similarity between two distributions:
\begin{align*}
\KL(q(\bo) ~|~ p(\bo | \X, \Y)),
\end{align*}
resulting in the approximate predictive distribution 
\begin{align} \label{eq:predictive_dist}
q(\y^* | \x^*) = \int p(\y^* | \x^*, \bo) q(\bo)\td \bo.
\end{align}

Minimising the Kullback--Leibler divergence is equivalent to maximising the \textit{log evidence lower bound} \citep{Bishop2006Pattern},
\begin{align}
&\cL_{\text{VI}} := \int q(\bo) \log p(\Y | \X, \bo) \td \bo - \KL(q(\bo) || p(\bo)) \label{eq:L:VI}
\end{align}
with respect to the variational parameters defining $q(\bo)$. Note that the KL divergence in the last equation is between the approximate posterior and the \textit{prior} over $\bo$. Maximising this objective will result in a variational distribution $q(\bo)$ that explains the data well (as obtained from the first term---the log likelihood) while still being close to the prior---preventing the model from over-fitting.

We next present a variational approximation to the Gaussian process extending on \citep{Gal2015Improving}, which results in a model mathematically identical to the use of dropout in arbitrarily structured NNs with arbitrary non-linearities.

\section{Dropout as a Bayesian Approximation}

%section 3 - main derivation taking GPs following rich derivation

We show that deep NNs with dropout applied before every weight layer are mathematically equivalent to approximate variational inference in the deep Gaussian process (marginalised over its covariance function parameters). For this we build on previous work \citep{Gal2015Improving} that applied variational inference in the \textit{sparse spectrum} Gaussian process approximation \citep{lazaro2010sparse}. Starting with the full Gaussian process we will develop an approximation that will be shown to be equivalent to the NN optimisation objective with dropout (eq.\ \eqref{eq:L:dropout}) with either the Euclidean loss (eq.\ \eqref{eq:NN:reg}) in the case of regression or softmax loss (eq.\ \eqref{eq:NN:class}) in the case of classification. This view of dropout will allow us to derive new  probabilistic results in deep learning. %We begin with the task of regression, and will handle classification in last subsection.

\subsection{A Gaussian Process Approximation} \label{sect:gp_approx_model}

We begin by defining our covariance function. Let $\sigma(\cdot)$ be some non-linear function such as the rectified linear (ReLU) or the hyperbolic tangent function (TanH).
% such that $\sigma(0)=0$ \red{do we use this condition?}. 
We define $\K(\x, \y)$ to be 
$$
\K(\x, \y) = \int p(\w) p(b) \sigma(\w^T \x + b) \sigma(\w^T \y + b) \td \w \td b
$$
with $p(\w)$ a standard multivariate normal distribution of dimensionality $Q$ and some distribution $p(b)$. It is trivial to show that this defines a valid covariance function following \citep{tsuda2002marginalized}.

We use Monte Carlo integration with $K$ terms to approximate the integral above. This results in the finite rank covariance function
\begin{align*}
\Kh(\x, \y) &= 
\frac{1}{K} \sum_{k=1}^K 
 \sigma(\w_k^T \x + b_k) \sigma(\w_k^T \y + b_k)
\end{align*}
with $\w_k \sim p(\w)$ and $b_k \sim p(b)$. $K$ will be the number of hidden units in our single hidden layer NN approximation. 

Using $\Kh$ instead of $\K$ as the covariance function of the Gaussian process yields the following generative model: %with $k = 1, ..., K$:
\begin{align} \label{eq:generative_model_approx}
\w_k &\sim p(\w), 
~b_k \sim p(b), \notag\\
\W_1 &= [\w_k]_{k=1}^K, \Bb = [b_k]_{k=1}^K \notag\\
\Kh(\x, \y) &= \frac{1}{K} \sum_{k=1}^K 
 \sigma(\w_k^T \x + b_k) \sigma(\w_k^T \y + b_k) \notag\\
\F \svert \X, \W_1, \Bb &\sim \N(\bz, \Kh(\X, \X)) \notag\\
\Y \svert \F &\sim \N(\F, \tau^{-1} \I_N),
\end{align}
with $\W_1$ a $Q \times K$ matrix parametrising our covariance function.

%We will use $\Kh$ as our Gaussian process covariance function from now on, replacing $\K$.
Integrating over the covariance function parameters results in the following predictive distribution:
\begin{align*}
p(\Y | \X) &= \int p(\Y | \F) p(\F | \W_1, \Bb, \X) p(\W_1) p(\Bb) %\td \F \W_1 \Bb
\end{align*}
where the integration is with respect to $\F, \W_1$, and $\Bb$.
%We introduce an auxiliary random variable. % to help us.

Denoting the $1 \times K$ row vector 
\begin{align*}
\bphi(\x, \W_1, \Bb) = \sqrt{\frac{1}{K}} \sigma(\W_1^T \x + \Bb) 
%\bphi(\x, \W_1, \Bb) = \bigg[ \sqrt{\frac{1}{K}} \sigma(\w_k^T \x + b_k) \bigg]_{k=1}^K
\end{align*}
and the $N \times K$ feature matrix $\Phi = [\bphi(\x_n, \W_1, \Bb)]_{n=1}^N$, we have $\Kh(\X,\X) = \Phi\Phi^T$.
We rewrite $p(\Y | \X)$ as 
\begin{align*}
&p(\Y | \X) = \int \N(\Y; \bz, \Phi\Phi^T + \tau^{-1} \I_N) p(\W_1)p(\Bb) \td \W_1 \td \Bb,
\end{align*}
analytically integrating with respect to $\F$.

The normal distribution of $\Y$ inside the integral above can be written as a joint normal distribution over $\y_d$, the $d$'th columns of the $N \times D$ matrix $\Y$, for $d = 1, ..., D$. 
For each term in the joint distribution, following identity \citep[page 93]{Bishop2006Pattern}, we introduce a $K \times 1$ auxiliary random variable $\w_d \sim \N(0, \I_K)$,
\begin{align}\label{eq:cond-prob-with-aux}
&\N(\y_d; 0, \Phi\Phi^T + \tau^{-1} \I_N) = \int \N(\y_d; \Phi\w_d, \tau^{-1} \I_N) \N(\w_d; 0, \I_K) \td \w_d.
\end{align}

Writing $\W_2 = [\w_d]_{d=1}^D$ a $K \times D$ matrix, the above is equivalent to\footnote{This is equivalent to the weighted basis function interpretation of the Gaussian process \citep{Rasmussen2005Gaussian} where the various quantities are analytically integrated over.} 
$$
p(\Y | \X) 
= \int p(\Y | \X, \W_1, \W_2, \Bb) p(\W_1) p(\W_2)p(\Bb) %\td \W_1 \W_2 \Bb
$$
where the integration is with respect to $\W_1, \W_2$, and $\Bb$.

We have re-parametrised the GP model and marginalised over the additional auxiliary random variables $\W_1, \W_2,$ and $\Bb$. We next approximate the posterior over these variables with appropriate approximating variational distributions.

\subsection{Variational Inference in the Approximate Model}
Our sufficient statistics are $\W_1, \W_2$, and $\Bb$.
To perform variational inference in our approximate model we need to define a variational distribution $q(\W_1, \W_2, \Bb) :=q (\W_1)q(\W_2)q(\Bb)$. We define $q(\W_1)$ to be a Gaussian mixture distribution with two components, factorised over $Q$:\footnote{Note that this is a bi-modal distribution defined over each output dimensionality; as a result the joint distribution over $\W_1$ is highly multi-modal.}
\begin{align} \label{eq:variaitonal_dist_1}
q(\W_1) &= \prod_{q=1}^Q q(\w_q), \\
q(\w_q) &= p_1 \N(\m_q, \bsigma^2 \I_K) + (1-p_1) \N(0, \bsigma^2 \I_K) \notag
\end{align}
with some probability $p_1 \in [0, 1]$, scalar $\bsigma > 0$ and $\m_q \in \R^K$. We put a similar approximating distribution over $\W_2$:
\begin{align} \label{eq:variaitonal_dist_2}
q(\W_2) &= \prod_{k=1}^K q(\w_k), \\
q(\w_k) &= p_2 \N(\m_k, \bsigma^2 \I_D) + (1-p_2) \N(0, \bsigma^2 \I_D) \notag
\end{align}
with some probability $p_2 \in [0, 1]$.

We put a simple Gaussian approximating distribution over $\Bb$:
\begin{align} \label{eq:variaitonal_dist_3}
q(\Bb) &= \N(\m, \bsigma^2 \I_K).
\end{align}

Next we evaluate the log evidence lower bound for the task of regression, for which we optimise over the variational parameters $\M_1 = [\m_q]_{q=1}^{Q}$, $\M_2 = [\m_k]_{k=1}^K$, and $\m$, to maximise Eq.\ \eqref{eq:L:VI}. The task of classification is discussed later.

\subsection{Evaluating the Log Evidence Lower Bound for Regression}

We need to evaluate the log evidence lower bound:
\begin{align} \label{eq:L:GP-VI}
\hspace{-1mm}
\cL_{\text{GP-VI}} := &\int q(\W_1, \W_2, \Bb) \log p(\Y | \X, \W_1, \W_2, \Bb) - \KL(q(\W_1, \W_2, \Bb) || p(\W_1, \W_2, \Bb)),
\end{align}
where the integration is with respect to $\W_1, \W_2$, and $\Bb$.

For the task of regression we can rewrite the integrand as a sum:
\begin{align*}
\log p(\Y | \X, {\W}_1, {\W}_2, {\Bb}) &= \sum_{d=1}^D \log \N(\y_d; \Phi{\w}_d, \tau^{-1} \I_N) \\
&= 
- \frac{ND}{2} \log(2\pi)
+ \frac{ND}{2} \log(\tau) 
- \sum_{d=1}^D \frac{\tau}{2} || \y_d - \Phi{\w}_d ||^2_2,
\end{align*}
as the output dimensions of a multi-output Gaussian process are assumed to be independent.
Denote $\widehat{\Y} = \Phi \W_2$. We can then sum over the rows instead of the columns of $\widehat{\Y}$ and write
$$\sum_{d=1}^D \frac{\tau}{2} || \y_d - \widehat{\y}_d ||^2_2 = \sum_{n=1}^N \frac{\tau}{2} || \y_n - \widehat{\y}_n ||^2_2.$$
Here $\widehat{\y}_n = \bphi(\x_n, {\W}_1, {\Bb}) {\W}_2 = \sqrt{\frac{1}{K}} \sigma(\x_n {\W}_1 + {\Bb}) {\W}_2$, resulting in the integrand
%Since the realisations are sampled from a mixture of Gaussians over the rows of $\widehat{\W}_1, \widehat{\W}_2$, we have that a random subset of the rows of the matrices are set to near-zero values.
\begin{align*}
\log p(\Y | \X, {\W}_1, {\W}_2, {\Bb}) &= \sum_{n=1}^N \log \N(\y_n; \bphi(\x_n, {\W}_1, {\Bb}) {\W}_2, \tau^{-1} \I_D).
\end{align*}
This allows us to write the log evidence lower bound as
\begin{align*}
\hspace{-1mm}
\sum_{n=1}^N \int q(\W_1, \W_2, \Bb) \log p(\y_n | \x_n, \W_1, \W_2, \Bb) - \KL(q(\W_1, \W_2, \Bb) || p(\W_1, \W_2, \Bb)).
\end{align*}

We re-parametrise the integrands in the sum to not depend on $\W_1, \W_2$, and $\Bb$ directly, but instead on the standard normal distribution and the Bernoulli distribution. 
%Let $\bepsilon_1 \sim \N(\bz, \I_{Q \times K})$ and $\sBb_{1,q} \sim \text{Bernoulli}(p_1)$ for $q = 1, ..., Q$, and $\bepsilon_2 \sim \N(\bz, \I_{K \times D})$ and $\sBb_{2,k} \sim \text{Bernoulli}(p_2)$ for $k = 1, ..., K$. Finally let $\bepsilon \sim \N(0, \I_K)$. We write 
Let $q(\bepsilon_1) = \N(\bz, \I_{Q \times K})$ and $q(\sBb_{1,q}) = \text{Bernoulli}(p_1)$ for $q = 1, ..., Q$, and $q(\bepsilon_2) = \N(\bz, \I_{K \times D})$ and $q(\sBb_{2,k}) = \text{Bernoulli}(p_2)$ for $k = 1, ..., K$. Finally let $q(\bepsilon) = \N(0, \I_K)$. We write 
\begin{align} \label{eq:reparam_reg}
\W_1 &= \sBb_1 (\M_1 + \bsigma \bepsilon_1) + (1-\sBb_1) \bsigma \bepsilon_1 , \notag\\
\W_2 &= \sBb_2 (\M_2 + \bsigma \bepsilon_2) + (1-\sBb_2) \bsigma \bepsilon_2 , \notag\\
\Bb &= \m + \bsigma \bepsilon, 
\end{align}
allowing us to re-write the sum over the integrals in the above equation as 
\begin{align*}
&\sum_{n=1}^N \int q(\W_1, \W_2, \Bb) \log p(\y_n | \x_n, \W_1, \W_2, \Bb) \td \W_1 \td \W_2 \td \Bb \\
&\qquad = \sum_{n=1}^N \int q ( \sBb_1, \bepsilon_1, \sBb_2, \bepsilon_2, \bepsilon ) \log p(\y_n | \x_n, \W_1(\sBb_1, \bepsilon_1), \W_2 ( \sBb_2, \bepsilon_2 ), \Bb (\bepsilon)) %\td \bepsilon_1 \td \sBb_1 \td \bepsilon_2 \td \sBb_2 \td \bepsilon.
\end{align*}
where each integration is over $\bepsilon_1, \sBb_1, \bepsilon_2, \sBb_2$, and $\bepsilon$.

We estimate each integral using Monte Carlo integration with a distinct single sample to obtain:
\begin{align*}
\cL_{\text{GP-MC}} := &\sum_{n=1}^N \log p(\y_n | \x_n, \widehat{\W}_1^n, \widehat{\W}_2^n, \widehat{\Bb}^n) - \KL(q(\W_1, \W_2, \Bb) || p(\W_1, \W_2, \Bb))
\end{align*}
with realisations $\widehat{\W}_1^n, \widehat{\W}_2^n, \widehat{\Bb}^n$ defined following eq.\ \eqref{eq:reparam_reg} with $\widehat{\bepsilon}_1^n \sim \N(\bz, \I_{Q \times K})$, $\widehat{\sBb}_{1,q}^n \sim \text{Bernoulli}(p_1)$, $\widehat{\bepsilon}_2^n \sim \N(\bz, \I_{K \times D})$, and $\widehat{\sBb}_{2,k}^n \sim \text{Bernoulli}(p_2)$.
Following \citep{blei2012variational,hoffman2013stochastic,kingma2013auto,rezende2014stochastic,titsias2014doubly}, optimising the \textit{stochastic} objective $\cL_{\text{GP-MC}}$ we would converge to the same limit as $\cL_{\text{GP-VI}}$. 

We can't evaluate the KL divergence term between a mixture of Gaussians and a single Gaussian analytically. However we can perform Monte Carlo integration like in the above. A further approximation for large $K$ (number of hidden units) and small $\bsigma^2$ yields a weighted sum of KL divergences between the mixture components and the single Gaussian (proposition \ref{prop:KL_mixture_of_Gaussians} in the appendix). Intuitively, this is because the entropy of a mixture of Gaussians with a large enough dimensionality and randomly distributed means tends towards the sum of the Gaussians' volumes.
Following the proposition, for large enough $K$ we can approximate the KL divergence term as 
\begin{align*}
&\KL(q(\W_1) || p(\W_1)) \approx QK(\bsigma^2 - \log(\bsigma^2) - 1) + \frac{p_1}{2} \sum_{q=1}^Q \m_q^T \m_q + C
\end{align*}
with $C$ constant w.r.t.\ our parameters, 
and similarly for $\KL(q(\W_2) || p(\W_2))$. The term $\KL(q(\Bb) || p(\Bb))$ can be evaluated analytically as
\begin{align*}
\KL(q(\Bb) || p(\Bb)) = \frac{1}{2} \big( \m^T\m + K(\bsigma^2 - \log(\bsigma^2) - 1) \big) + C
\end{align*}
with $C$ constant w.r.t.\ our parameters. We drop the constants for brevity.

Next we explain the relation between the above equations and the equations brought in section \ref{sect:dropout}.

\subsection{Log Evidence Lower Bound Optimisation}

Ignoring the constant terms $\tau, \bsigma$ we obtain the maximisation objective
\begin{align} \label{eq:L:GP-MC}
&\cL_{\text{GP-MC}} \propto - \frac{\tau}{2} \sum_{n=1}^N || \y_n - \widehat{\y}_n ||^2_2 - \frac{p_1}{2} ||\M_1||^2_2 - \frac{p_2}{2} ||\M_2||^2_2 - \frac{1}{2}||\m||^2_2.
\end{align}
Note that in the Gaussian processes literature the terms $\tau, \bsigma$ will often be optimised as well. 

Letting $\bsigma$ tend to zero, we get that the KL divergence of the prior blows-up and tends to infinity. However, in real-world scenarios setting $\bsigma$ to be machine epsilon ($10^{-33}$ for example in quadruple precision decimal systems) results in a constant value $\log \bsigma = -76$. With high probability samples from a standard Gaussian distribution with such a small standard deviation will be represented on a computer, in effect, as zero. Thus the random variable realisations $\widehat{\W}_1^n, \widehat{\W}_2^n, \widehat{\Bb}^n$ can be approximated as
\begin{align*}
\widehat{\W}_1^n \approx \widehat{\sBb}_1^n \M_1, ~~
\widehat{\W}_2^n \approx \widehat{\sBb}_2^n \M_2, ~~
\widehat{\Bb}^n \approx \m.
\end{align*}
Note that $\widehat{\W}_1^n$ are not maximum a posteriori (MAP) estimates, but random variable realisations.
This gives us 
\begin{align*}
\widehat{\y}_n \approx \sqrt{\frac{1}{K}} \sigma(\x_n (\widehat{\sBb}_1^n \M_1) + \m) (\widehat{\sBb}_2^n \M_2).
\end{align*}

Scaling the optimisation objective by a positive constant $\frac{1}{\tau N}$ doesn't change the parameter values at its optimum (as long as we don't optimise with respect to $\tau$). 
We thus scale the objective to get
\begin{align} \label{eq:L:GP-MC-reg}
\cL_{\text{GP-MC}} &\propto  - \frac{1}{2 N} \sum_{n=1}^N || \y_n - \widehat{\y}_n ||^2_2 - \frac{p_1}{2 \tau N} ||\M_1||^2_2 - \frac{p_2}{2 \tau N} ||\M_2||^2_2 - \frac{1}{2 \tau N}||\m||^2_2 
\end{align}
and we recovered equation \eqref{eq:NN:reg} for an appropriate setting of $\tau$. Maximising eq.\ \eqref{eq:L:GP-MC-reg} results in the same optimal parameters as the minimisation of eq.\ \eqref{eq:L:dropout}. Note that eq.\ \eqref{eq:L:GP-MC-reg} is a scaled unbiased estimator of eq.\ \eqref{eq:L:GP-VI}. With correct stochastic optimisation scheduling both will converge to the same limit.

%\red{Compare to batch optimisation? do we need this?}

The optimisation of $\cL_{\text{GP-MC}}$ proceeds as follows. We sample realisations $\widehat{\sBb}_1^n, \widehat{\sBb}_2^n$ to evaluate the lower-bound and its derivatives. We perform a single optimisation step (for example a single gradient descent step), and repeat, sampling new realisations.

We can make several interesting observations at this point. First, we can find the model precision from the identity $\weightdecay_1 = \frac{p_1}{2 \tau N}$ which gives $\tau = \frac{p_1}{2 \weightdecay_1 N}$. Second, it seems that the weight-decay for the dropped-out weights should be scaled by the probability of the weights to not be dropped. Lastly, it is known that setting the dropout probability to zero ($p_1 = p_2 = 1$) results in a standard NN. Following the derivation above, this would result in delta function approximating distributions on the weights (replacing eqs.\ \eqref{eq:variaitonal_dist_1}-\eqref{eq:variaitonal_dist_3}). As was discussed in \citep{lazaro2010sparse} this leads to model over-fitting. Empirically it seems that the Bernoulli approximating distribution is sufficient to considerably prevent over-fitting. 

Note that even though our approximating distribution is, in effect, made of a sum of two point masses, each point mass with zero variance, the mixture does not have zero variance. It has the variance of a Bernoulli random variable, which is transformed through the network. This choice of approximating distribution results in the dropout model.

\section{Extensions}

We have presented the derivation for a single hidden layer NN in the task of regression. The derivation above extends to tasks of classification, flexible priors, mini-batch optimisation, deep models, and much more. This will be studied next.

\subsection{Evaluating the Log Evidence Lower Bound for Classification} \label{sect:classification}
For classification we have an additional step in the generative model in eq.\ \eqref{eq:generative_model_class} compared to eq.\ \eqref{eq:generative_model_reg}, sampling class assignment $c_n$ given weight $\y_n$. We can write this generative model using the auxiliary random variables introduced in section \ref{sect:gp_approx_model} for the regression case by
\begin{align*}
p(\bc | \X) &= \int p(\bc | \Y) p(\Y | \X) \td \Y \\
&= \int p(\bc | \Y) \bigg( \int p(\Y | \X, \W_1, \W_2, \Bb) p(\W_1, \W_2, \Bb) \td \W_1 \td \W_2 \td \Bb \bigg) \td \Y
\end{align*}
where $\bc$ is an $N$ dimensional vector of categorical values.
We can write the log evidence lower bound in this case as (proposition \ref{prop:lower_bound_class} in the appendix)
\begin{align*}
\hspace{-1mm}
\cL_{\text{GP-VI}} := &\int p(\Y|\X, \W_1, \W_2, \Bb) q(\W_1, \W_2, \Bb) \log p(\bc | \Y) \td \W_1 \td \W_2 \td \Bb \td \Y  \notag\\
&\quad - \KL(q(\W_1, \W_2, \Bb) || p(\W_1, \W_2, \Bb)).
\end{align*}

The integrand of the first term can be re-written like before as a sum
\begin{align*}
&\log p(\bc | \widehat{\Y}) =
\sum\limits_{n=1}^N \log p(\bc_n | \widehat{\y}_n)
\end{align*}
resulting in a log evidence lower bound given by
\begin{align*}
\hspace{-1mm}
\cL_{\text{GP-VI}} := &\sum_{n=1}^N \int p(\y_n|\x_n, \W_1, \W_2, \Bb) q(\W_1, \W_2, \Bb) \log p(\bc_n | \y_n) %\td \W_1 \td \W_2 \td \Bb \td \y_n 
\notag\\
&\qquad - \KL(q(\W_1, \W_2, \Bb) || p(\W_1, \W_2, \Bb))
\end{align*}
where the integration of each term in the first expression is over $\W_1, \W_2, \Bb$, and $\y_n$. 

We can re-parametrise each integrand in the sum following \eqref{eq:reparam_reg} to obtain 
\begin{align} \label{eq:reparam_class}
\W_1 &= \sBb_1 (\M_1 + \bsigma \bepsilon_1) + (1-\sBb_1) \bsigma \bepsilon_1 , \notag\\
\W_2 &= \sBb_2 (\M_2 + \bsigma \bepsilon_2) + (1-\sBb_2) \bsigma \bepsilon_2 , \notag\\
\Bb &= \m + \bsigma \bepsilon, \notag\\
\y_n &= \sqrt{\frac{1}{K}} \sigma(\x_n \W_1 + \Bb) \W_2.
\end{align}

Like before, we estimate each integral using Monte Carlo integration with a distinct single sample to obtain:
\begin{align*}
\cL_{\text{GP-MC}} := &\sum_{n=1}^N \log p(\bc_n | \widehat{\y}_n(\x_n, \widehat{\W}_1^n, \widehat{\W}_2^n, \widehat{\Bb}^n)) - \KL(q(\W_1, \W_2, \Bb) || p(\W_1, \W_2, \Bb))
\end{align*}
with realisations $\widehat{\y}_n, \widehat{\W}_1^n, \widehat{\W}_2^n$, and $\widehat{\Bb}^n$.

Each term in the sum in the first expression can be re-written as
\begin{align*}
&\log p(\bc_n | \widehat{\y}_n) =
\widehat{y}_{n c_n}
- \log \bigg( \sum_{d'} \exp(\widehat{y}_{nd'}) \bigg).
\end{align*}
We evaluate the second expression as before. Scaling the objective by a positive $\frac{1}{N}$ this results in the following maximisation objective,
\begin{align*}
&\cL_{\text{GP-MC}} \propto \frac{1}{N} \sum\limits_{n=1}^N 
\widehat{p}_{n, c_n}
- \frac{p_1}{2 N} ||\M_1||^2_2 - \frac{p_2}{2 N} ||\M_2||^2_2 - \frac{1}{2 N}||\m||^2_2,
\end{align*}
with $\widehat{p}_{n, c_n} = \log p(\bc_n | \widehat{\y}_n)$, identical (up to a sign flip) to that of eqs.\ \eqref{eq:NN:class}, \eqref{eq:L:dropout} for appropriate selection of weight decay $\weightdecay$.

\subsection{Prior Length-scale}
We can define a more expressive prior than $\N(0,I_K)$ over the weights of the first layer $\W_1$. This will allow us to incorporate our prior belief over the frequency of the observed data. Instead of $p(\w) = \N(\w; 0, I_K)$ we may use $p_l(\w) = \N(\w; 0, l^{-2} I_K)$ with length-scale $l$. 

To use this more expressive prior we need to adapt proposition \ref{prop:KL_mixture_of_Gaussians} approximating the prior term's KL divergence. It is easy to see that the KL divergence between $q(\x)$ and $p_l(\x)$ can be approximated as:
\begin{align*}
\KL(q(\x) || p_l(\x)) \approx
\sum_{i=1}^L \frac{p_i}{2} \big( l^{2} \bmu_i^T \bmu_i + \tr ( l^{2} \bSigma_i ) - K - \log | \bSigma_i | + K \log l^{-2} \big)
\end{align*}
plus a constant for large enough $K$. This follows from the KL divergence for multivariate normal distributions, and can be repeated for $\Bb$ as well (defining $p_{l'}(\Bb) = \N(\Bb; 0, l'^{-2} I_K)$). Following the previous derivations we obtain the regression objective
\begin{align*}
\cL_{\text{GP-MC}} &\propto  - \frac{1}{2 N} \sum_{n=1}^N || \y_n - \widehat{\y}_n ||^2_2 - \frac{l^2 p_1}{2 \tau N} ||\M_1||^2_2 - \frac{p_2}{2 \tau N} ||\M_2||^2_2 - \frac{l'^2}{2 \tau N}||\m||^2_2,
\end{align*}
and similarly for classification.

For high frequency data, setting $l$ and $l'$ to small values would result in a weaker regularisation over the weights $\M_1$ and $\m$. This leads to larger magnitude weights which can capture high frequency data \citep{Gal2015Improving}. 

The length-scale decouples the precision parameter $\tau$ from the weight-decays $\weightdecay$:
\begin{align}\label{eq:weightdecay_tau_relation}
\weightdecay_1 = \frac{l^2 p_1}{2 N \tau},
\end{align}
which results in
\begin{align}\label{eq:tau_weightdecay_relation}
\tau = \frac{l^2 p_1}{2 N \weightdecay_1}.
\end{align}

The length-scale is a user specified value that captures our belief over the function frequency. A short length-scale $l$ (corresponding to high frequency data) with high precision $\tau$ (equivalently, small observation noise) results in a small weight-decay $\lambda$ -- encouraging the model to fit the data well. A long length-scale with low precision results in a large weight-decay -- and stronger regularisation over the weights. This trade-off between the length-scale and model precision results in different weight-decay values. 

A similar term to the length-scale can be obtained for $\M_2$. Eq.\ \eqref{eq:cond-prob-with-aux} can be rewritten by substituting $\w_d' = \sqrt{\frac{1}{K}} \w_d$: we replace $\sqrt{\frac{1}{K}} \sigma(\W_1^T \x + \Bb) \w_d$ with $\sigma(\W_1^T \x + \Bb) \w_d'$ and replace $\N(\w_d; 0, \I_K)$ with $\N \big(\w_d'; 0, \frac{1}{K} \I_K \big)$. This results in the standard network structure often used in many implementations (without the additional scaling by $\sqrt{\frac{1}{K}}$). Following the above derivation, we can rewrite the KL divergence between $q(\x)$ and $\N (0, K^{-1} \I_K)$ to obtain the objective
\begin{align} \label{eq:L:GP-MC-reg-lengthscale}
\cL_{\text{GP-MC}} &\propto  - \frac{1}{2 N} \sum_{n=1}^N || \y_n - \widehat{\y}_n ||^2_2 - \frac{l^2 p_1}{2 \tau N} ||\M_1||^2_2 - \frac{K p_2}{2 \tau N} ||\M_2||^2_2 - \frac{l'^2}{2 \tau N}||\m||^2_2.
\end{align}
$K$ here acts in a similar way to $l$ and $l'$. A large number of hidden units results in a stronger regularisation, pushing the elements of $\M_2$ to become smaller and smaller. A smaller $K$ will result in a weaker regularisation term over $\M_2$, allowing its elements to take larger values.

\subsection{Mini-batch Optimisation}
We often use mini-batches when optimising eq.\ \eqref{eq:L:dropout}. This is done by setting eq.\ \eqref{eq:NN:reg} to
\begin{align*} 
E = \frac{1}{2M}
\sum_{n \in S} ||\y_n - \widehat{\y}_n||^2_2
\end{align*}
with a random subset of data points $S$ of size $M$, and similarly for classification.

Using recent results in stochastic variational inference \citep{hoffman2013stochastic} we can derive the equivalent to this in the GP approximation above. We change the likelihood term $-\frac{\tau}{2} \sum_{n=1}^N || \y_n - \widehat{\y}_n ||^2_2$ in eq.\ \eqref{eq:L:GP-MC} to sum over the data points in $S$ alone, and multiply the term by $\frac{N}{M}$ to get an unbiased estimator to the original sum.
Multiplying equation \eqref{eq:L:GP-MC} by $\frac{1}{\tau N}$ as before we obtain
\begin{align} \label{eq:L:GP-MC-reg-SVI}
\cL_{\text{GP-MC}} &\approx  - \frac{1}{2 M} \sum_{n \in S} || \y_n - \widehat{\y}_n ||^2_2 - \frac{p_1}{2 \tau N} ||\M_1||^2_2 - \frac{p_2}{2 \tau N} ||\M_2||^2_2 - \frac{1}{2 \tau N}||\m||^2_2 
\end{align}
recovering eq.\ \eqref{eq:L:dropout} for the mini-batch optimisation case as well.

\subsection{Predictive Log-likelihood}

Given a dataset $\X, \Y$ and a new data point $\x^*$ we can calculate the probability of possible output values $\y^*$ using the predictive probability $p(\y^* | \x^*, \X, \Y)$. The log of the predictive likelihood captures how well the model fits the data, with larger values indicating better model fit. 

Our predictive log-likelihood can be approximated by Monte Carlo integration of eq.\ \eqref{eq:predictive_dist} with $T$ terms:
\begin{align*} 
\log p(\y^* | \x^*, \X, \Y) &= \log \int p ( \y^* | \x^*, \bo ) p (\bo | \X, \Y) \td \bo \\
&\approx \log \int p ( \y^* | \x^*, \bo ) q (\bo) \td \bo \\
&\approx \log \bigg( \frac{1}{T} \sum_{t=1}^T p ( y^* | \x^*, \bo_t ) \bigg)
\end{align*}
with $\bo_t \sim q ( \bo \big)$. 

For regression we have 
\newcommand{\logsumexp}{\text{logsumexp}}
\begin{align} \label{eq:predictive_dist_reg}
\log p(\y^* | \x^*, \X, \Y) \approx \logsumexp \bigg( -\frac{1}{2} \tau || \y - \widehat{\y}_t ||^2 \bigg) - \log T - \frac{1}{2} \log 2 \pi - \frac{1}{2} \log \tau^{-1}
\end{align}
with our precision parameter $\tau$.

Uncertainty quality can be determined from this quantity as well.
Excessive uncertainty (large observation noise, or equivalently small model precision $\tau$) results in a large penalty from the last term in the predictive log-likelihood. 
An over-confident model with large model precision compared to poor mean estimation results in a penalty from the first term -- the distance $|| \y - \widehat{\y}_t ||^2$ gets amplified by $\tau$ which drives the first term to zero.

\subsection{Going Deeper than a Single Hidden Layer} \label{sect:going_deep}

We will demonstrate how to extend the derivation above to two hidden layers for the case of regression. Extension to further layers and classification is trivial.

We use the deep GP model -- feeding the output of one GP to the covariance of the next, in the same way the input is used in the covariance of the first GP.
However, to match the dropout NN model, we have to select a different covariance function for the GPs in the layers following the first one. For clarity, we denote here all quantities related to the first GP with subscript $1$, and as a second subscript denote the element index. So $\phi_{1,nk}$ denotes the element at row $n$ column $k$ of the variable $\Phi_1$, and $\bphi_{1,n}$ denotes row $n$ of the same variable.

We next define the new covariance function $\K_2$. Let $\sigma_2$ be some non-linear function, not necessarily the same as the one used with the previous covariance function. We define $\K_2(\x, \y)$ to be 
$$
\K_2(\x, \y) = \frac{1}{K_2} \int p(\Bb_2) \sigma_2(\x + \Bb_2)^T \sigma_2(\y + \Bb_2) \td \Bb_2
$$
with some distribution $p(\Bb_2)$ over $\Bb_2 \in \R^{K_1}$.

We use Monte Carlo integration with one term to approximate the integral above. This results in 
\begin{align*}
\Kh_2(\x, \y) &= 
\frac{1}{K_2} 
 \sigma(\x + \Bb_2)^T \sigma(\y + \Bb_2)
\end{align*}
with $\Bb_2 \sim p(\Bb_2)$.

Using $\Kh_2$ instead as the covariance function of the second Gaussian process yields the following generative model. First, we sample the variables for all covariance functions:
\begin{align*}
\w_{1,k} &\sim p(\w_1), 
~b_{1,k} \sim p(b_1), 
~\Bb_2 \sim p(\Bb_2) \notag\\
\W_1 &= [\w_{1,k}]_{k=1}^{K_1}, \Bb_1 = [b_{1,k}]_{k=1}^{K_1}
\end{align*}
with $\W_1$ a $Q \times K_1$ matrix, $\Bb_1$ a $K_1$ dimensional vector, and $\Bb_2$ a $K_2$ dimensional vector.
Given these variables, we define the covariance functions for the two GPs:
\begin{align*}
\Kh_1(\x, \y) &= \frac{1}{K_1} \sum_{k=1}^{K_1} 
 \sigma_1(\w_{1,k}^T \x + b_{1,k}) \sigma_1(\w_{1,k}^T \y + b_{1,k}) \notag\\
\Kh_2(\x, \y) &= \frac{1}{K_2} 
 \sigma(\x + \Bb_2)^T \sigma(\y + \Bb_2)
\end{align*}
Conditioned on these variables, we generate the model's output:
\begin{align*}
\F_1 \svert \X, \W_1, \Bb_1 &\sim \N(\bz, \Kh_1(\X, \X)) \notag\\
\F_2 \svert \X, \Bb_2 &\sim \N(\bz, \Kh_2(\F_1, \F_1)) \notag\\
\Y \svert \F_2 &\sim \N(\F_2, \tau^{-1} \I_N).
\end{align*}

We introduce auxiliary random variables $\W_2$ a $K_1 \times K_2$ matrix and $\W_3$ a $K_2 \times D$ matrix. The columns of each matrix distribute according to $\N(0, \I)$.

Like before, we write $\Kh_1(\X, \X) = \Phi_1 \Phi_1^T$ with $\Phi_1$ an $N \times K_1$ matrix and $\Kh_2(\X, \X) = \Phi_2 \Phi_2^T$ with $\Phi_2$ an $N \times K_2$ matrix:
\begin{align*}
\phi_{1,nk} &= \sqrt{\frac{1}{K_1}} \sigma_1(\w_{1,k}^T \x_n + b_{1,k}) \\
\phi_{2,nk} &= \sqrt{\frac{1}{K_2}} \sigma_2(f_{1,nk} + b_{2,k}).
\end{align*}
We can then write $\F_{1} = \Phi_1 \W_2$, since 
$$\mathbb{E}_{p(\W_2)}(\F_1) = \Phi_1\mathbb{E}_{p(\W_2)} (\W_2) = \bz$$ 
and 
$$\Cov_{p(\W_2)} (\F_1) = \mathbb{E}_{p(\W_2)} (\F_1 \F_1^T) = \Phi_1 \mathbb{E}_{p(\W_2)} (\W_2 \W_2^T) \Phi_1^T = \Phi_1 \Phi_1^T,$$
and similarly for $\F_2$. Note that $\F_1$ is an $N \times K_2$ matrix, and that $\F_2$ is an $N \times D$ matrix.
Thus, 
$$\phi_{2,nk} = \sqrt{\frac{1}{K_2}} \sigma_2(\w_{2,k}^T \bphi_{1,n} + b_{2,k}).$$

Finally, we can write
\begin{align*}
\y_n | \X, \W_1, \Bb_1, \W_2, \Bb_2, \W_3 \ \sim \N(\W_{3}^T \bphi_{2,n}, \tau^{-1} \I_{D}).
\end{align*}
The application of variational inference continues as before.

Note that an alternative approach would be to use the same covariance function in each layer. For that we would need to set $\W_2$ to be of dimensions $K_1 \times K_1$ normally distributed. This results in a product of two normally distributed matrices: $\W_2$ and the weights resulting from the Monte Carlo integration of $\Kh_2$ (denoted $\W_2'$ for convenience). Even though the composition of two linear transformations is a linear transformation, the resulting prior distribution over the weight matrix $\W_2\W_2'$ is quite complicated.

\section{Insights and Applications}

Our derivation suggests many applications and insights, including the representation of model uncertainty in deep learning, better model regularisation, computationally efficient Bayesian convolutional neural networks, use of dropout in recurrent neural networks, and the principled development of dropout variants, to name a few. These are \textit{briefly} discussed here, and studied more in depth in separate work.
%%%%% TODO: ref to separate doc

\subsection{Insights}
The Gaussian process's robustness to over-fitting can be contributed to several different aspects of the model and is discussed in detail in \citep{Rasmussen2005Gaussian}. 
Our interpretation offers a possible explanation to dropout's ability to avoid over-fitting. Dropout can be seen as approximately integrating over the weights of the network.

Our derivation also suggests that an approximating variational distribution should be placed over the bias $\Bb$. This could be sampled jointly with the weights $\W$. Note that it is possible to interpret dropout as doing so when used with non-linearities with $\sigma(0)=0$. This is because the product by the vector of Bernoulli random variables can be passed through the non-linearity in this case. However the GP interpretation changes in this case, as the inputs are randomly set to zero rather than the weights. By sampling Bernoulli variables for the bias weights as well, the model might become more robust.

In \citep{srivastava2014dropout} alternative distributions to the Bernoulli are discussed. For example, it is suggested that multiplying the weights by $\N(1,\bsigma^2)$ results in similar results to dropout (although this becomes a more costly operation at run time). This can be seen as an alternative approximating variational distribution where we set $q(\w_k) = \m_k + \m_k \bsigma \bepsilon$ with $\bepsilon \sim \N(0, \I)$.

We noted in the text that the weight-decay for the dropped-out weights should be scaled by the probability of the weights to not be dropped. This follows from the KL approximation. 
%This might explain why doubling the learning rate of the bias during NN optimisation works well in practice in dropout networks with $p=0.5$.
We also note that the model brought in section \ref{sect:dropout} does not use a bias at the output layer. This is equivalent to shifting the data by a constant amount and thus not treated in our derivation. Alternatively, using a Gaussian process mean function given by $\mu(\x) = \Bb_L$ is equivalent to setting the bias of the output layer to $\Bb_L$.

\subsubsection{Model Calibration}
We can show that the dropout model is not calibrated. This is because Gaussian processes' uncertainty is not calibrated and the model draws its properties from these. The Gaussian process's uncertainty depends on the covariance function chosen, which we showed above to be equivalent to the non-linearities and prior over the weights. The choice of a GP's covariance function follows from our assumptions about the data. If we believe, for example, that the model's uncertainty should increase far from the data we might choose the squared exponential covariance function.

For many practical applications this means that model uncertainty can increase with data magnitude or be of different scale for different datasets. 
To calibrate model uncertainty in regression tasks we can scale the uncertainty linearly to remove data magnitude effects, and manipulate uncertainty percentiles to compare among different datasets. This can be done by fitting a simple distribution over the training set output uncertainty, and using the cumulative distribution function to find the relative ratio of a new data point's uncertainty to that of existing ones. This quantity can be used to compare a data point's uncertainty obtained from a model trained on one data distribution to another.

For example, if a test point has standard deviation $5$, whereas almost all other data points have standard deviation ranging from $0.2$ to $2$, then the data point will be in the top percentile, and the model will be considered as very uncertain about the point compared to the rest of the data. However, another model might give the same point standard deviation $5$ with most of the data modelled with standard deviation ranging from $10$ to $15$. In this model the data point will be in the lowest percentile, and the model will be considered as fairly certain about the point with respect to the rest of the data.

\subsection{Applications}

Our derivation suggests an estimate for dropout models by averaging $T$ forward passes through the network (referred to as \textit{MC dropout}, compared to \textit{standard dropout} with weight averaging).
This result has been presented in the literature before as model averaging \citep{srivastava2014dropout}. 
Our interpretation suggests a new look as to why MC dropout is more sensible than the current approach of averaging the weights. Furthermore, with the obtained samples we can estimate the model's confidence in its predictions and take actions accordingly. For example, in the case of classification, the model might return a result with high uncertainty, in which case we might decide to pass the input to a human to classify. Alternatively, one can use a weak and fast model to perform classification, and use a more elaborate but slower model only on inputs for which the weak model in uncertain. Uncertainty is important in reinforcement learning (RL) as well \citep{szepesvari2010algorithms}. With uncertainty information an agent can decide when to exploit and when to explore its environment. Recent advances in RL have made use of NNs to estimate agents' Q-value functions, a function that estimates the quality of different states and actions in the environment \citep{mnih2013playing}. Epsilon greedy search is often used in this setting, where an agent selects its currently estimated best action with some probability, and explores otherwise. With uncertainty estimates over the agent's Q-value function, techniques such as Thompson sampling \citep{thompson1933likelihood} can be used to train the model faster. These ideas are studied in the main paper.
%%%%% TODO: ref to separate doc

Following our interpretation, one should apply dropout before each weight layer and not only before inner-product layers at the end of the model. This is to avoid parameter over-fitting on all layers as the dropout model, in effect, integrates over the parameters. The use of dropout before some layers but not others corresponds to interleaving MAP estimates and fully Bayesian estimates. The application of dropout before every weight layer is not used in practice however, as empirical results using \textit{standard} dropout on some network topologies (after convolutions for example) suggest inferior performance. The use of MC dropout, however, with dropout applied before every weight layer results in much better empirical performance on some NN structures. 

One can also interpret the approximation above as approximate variational inference in Bayesian neural networks (NNs). Thus, dropout applied before every weight layer is equivalent to variational inference in Bayesian NNs. This allows us to develop new Bayesian NN architectures which are not directly related to the Gaussian process, using operations such as pooling and convolutions. This leads to good, efficient, and trivial approximations to Bayesian convolutional neural networks (convnets). We discuss these ideas with empirical evaluation in separate work.
%%%%% TODO: ref to separate doc

Another possible application is the adaptation of dropout to recurrent neural networks (RNNs). Currently, dropout is not used with these models as the repeated application of noise over potentially thousands of repetitions results in a very weak signal at the output. GP dynamical models \citep{wang2005gaussian} and recursive GPs with perfect integrators correspond to the ideas behind RNNs and long-short-term-memory (LSTM) networks \citep{hochreiter1997long}. The GP models integrate over the parameters and thus avoid over-fitting. Seen as a GP approximation one would expect there to exist a suitable dropout approximation for these tasks as well. We discuss these ideas in separate work.
%%%%% TODO: ref to separate doc

%Model ``ensembles'' are often used in deep learning as well, where the same model is trained several times and at test time the results of all models are averaged or passed to another model. This is computationally very expensive as either training time is increased considerably, or many computational resources are used at the same time. 
%One would expect that stochastically simulating forward passes through a dropout network will result in similar performance.

%We compared dropout uncertainty to that of the Gaussian process model on a variety of tasks on which other GP approximations have been compared as well \citep{Gal2015Improving}. 
%In future work we aim to compare the uncertainty of the model to that of Bayesian neural networks such as \citep{blundell2015weight} as well.

In future research we aim to assess model uncertainty on adversarial inputs as well, such as corrupted images that classify incorrectly with high confidence \citep{szegedy2013intriguing}. Adding or subtracting a single pixel from each input dimension is perceived as almost unchanged input to a human eye, but can change classification probabilities considerably. In the high dimensional input space the new corrupted image lies far from the data, and one would expect model uncertainty to increase for such inputs.

Lastly, our interpretation allows the development of principled extensions of dropout. The use of non-diminishing $\bsigma^2$ (eqs.\ \eqref{eq:variaitonal_dist_1} to \eqref{eq:variaitonal_dist_3}) and the use of a mixture of Gaussians with more than two components is an immediate example of such. For example the use of a low rank covariance matrix would allow us to capture complex relations between the weights. These approximations could result in alternative uncertainty estimates to the ones obtained with MC dropout. This is subject to current research.

\section{Conclusions}

We have shown that a neural network with arbitrary depth and non-linearities and with dropout applied before every weight layer is mathematically equivalent to an approximation to the deep Gaussian process (marginalised over its covariance function parameters). 
This interpretation offers an explanation to some of dropout's key properties. Our analysis suggests straightforward generalisations of dropout for future research which should improve on current techniques.

%\section{Conclusions and Future Research}
%
%We have shown that a multilayer perceptron with arbitrary depth and non-linearities and with dropout applied before every weight layer is mathematically equivalent to an approximation to the deep Gaussian process. 
%This interpretation offers an explanation to some of dropout's key properties. Our analysis suggests straightforward generalisations of dropout for future research which should improve on current techniques.
%
%In future work we aim to inspect the prospects of interleaving a Bayesian pipeline in the deep learning framework and extensions of dropout. We will further show that in language modelling, the uncertainty obtained from this interpretation can be used for word disambiguation, and percolated down the pipeline for different tasks.

% Acknowledgements should only appear in the accepted version. 
%\red{
%\section*{Acknowledgments} 
%The authors would like to thank Mr Shane Gu, Dr Ankur Handa, Mr Jiří Hron, Mr Nilesh Tripuraneni, Prof Yoshua Bengio, Prof Phil Blunsom, and many other reviewers for helpful comments. Yarin Gal is supported by the Google European Fellowship in Machine Learning.
%}

\bibliography{example_paper}

\begin{thebibliography}{}

\bibitem[Baldi and Sadowski, 2013]{baldi2013understanding}
Baldi, P. and Sadowski, P.~J. (2013).
\newblock Understanding dropout.
\newblock In {\em Advances in Neural Information Processing Systems}, pages
  2814--2822.

\bibitem[Bengio et~al., 2006]{bengio2006neural}
Bengio, Y., Schwenk, H., Sen{\'e}cal, J.-S., Morin, F., and Gauvain, J.-L.
  (2006).
\newblock Neural probabilistic language models.
\newblock In {\em Innovations in Machine Learning}, pages 137--186. Springer.

\bibitem[Bishop, 2006]{Bishop2006Pattern}
Bishop, C.~M. (2006).
\newblock {\em Pattern Recognition and Machine Learning (Information Science
  and Statistics)}.
\newblock Springer-Verlag New York, Inc., Secaucus, NJ, USA.

\bibitem[Blei et~al., 2012]{blei2012variational}
Blei, D.~M., Jordan, M.~I., and Paisley, J.~W. (2012).
\newblock Variational {B}ayesian inference with stochastic search.
\newblock In {\em Proceedings of the 29th International Conference on Machine
  Learning (ICML-12)}, pages 1367--1374.

\bibitem[Damianou and Lawrence, 2013]{damianou2013deep}
Damianou, A. and Lawrence, N. (2013).
\newblock Deep {G}aussian processes.
\newblock In {\em Proceedings of the Sixteenth International Conference on
  Artificial Intelligence and Statistics}, pages 207--215.

\bibitem[Gal et~al., 2015]{Gal2015Latent}
Gal, Y., Chen, Y., and Ghahramani, Z. (2015).
\newblock Latent {G}aussian processes for distribution estimation of
  multivariate categorical data.
\newblock In {\em Proceedings of the 32nd International Conference on Machine
  Learning (ICML-15)}.

\bibitem[Gal and Turner, 2015]{Gal2015Improving}
Gal, Y. and Turner, R. (2015).
\newblock Improving the {G}aussian process sparse spectrum approximation by
  representing uncertainty in frequency inputs.
\newblock In {\em Proceedings of the 32nd International Conference on Machine
  Learning (ICML-15)}.

\bibitem[Gal et~al., 2014]{Gal2014DistributedB}
Gal, Y., van~der Wilk, M., and Rasmussen, C. (2014).
\newblock Distributed variational inference in sparse {G}aussian process
  regression and latent variable models.
\newblock In Ghahramani, Z., Welling, M., Cortes, C., Lawrence, N., and
  Weinberger, K., editors, {\em Advances in Neural Information Processing
  Systems 27}, pages 3257--3265. Curran Associates, Inc.

\bibitem[Hensman et~al., 2013]{hensman2013Gaussian}
Hensman, J., Fusi, N., and Lawrence, N.~D. (2013).
\newblock Gaussian processes for big data.
\newblock In Nicholson, A. and Smyth, P., editors, {\em UAI}. AUAI Press.

\bibitem[Hinton et~al., 2012]{hinton2012improving}
Hinton, G.~E., Srivastava, N., Krizhevsky, A., Sutskever, I., and
  Salakhutdinov, R.~R. (2012).
\newblock Improving neural networks by preventing co-adaptation of feature
  detectors.
\newblock {\em arXiv preprint arXiv:1207.0580}.

\bibitem[Hochreiter and Schmidhuber, 1997]{hochreiter1997long}
Hochreiter, S. and Schmidhuber, J. (1997).
\newblock Long short-term memory.
\newblock {\em Neural computation}, 9(8):1735--1780.

\bibitem[Hoffman et~al., 2013]{hoffman2013stochastic}
Hoffman, M.~D., Blei, D.~M., Wang, C., and Paisley, J. (2013).
\newblock Stochastic variational inference.
\newblock {\em The Journal of Machine Learning Research}, 14(1):1303--1347.

\bibitem[Kingma and Welling, 2013]{kingma2013auto}
Kingma, D.~P. and Welling, M. (2013).
\newblock Auto-encoding variational {B}ayes.
\newblock {\em arXiv preprint arXiv:1312.6114}.

\bibitem[Krizhevsky et~al., 2012]{krizhevsky2012imagenet}
Krizhevsky, A., Sutskever, I., and Hinton, G.~E. (2012).
\newblock Imagenet classification with deep convolutional neural networks.
\newblock In {\em Advances in neural information processing systems}, pages
  1097--1105.

\bibitem[L{\'a}zaro-Gredilla et~al., 2010]{lazaro2010sparse}
L{\'a}zaro-Gredilla, M., Qui{\~n}onero-Candela, J., Rasmussen, C.~E., and
  Figueiras-Vidal, A.~R. (2010).
\newblock Sparse spectrum {G}aussian process regression.
\newblock {\em The Journal of Machine Learning Research}, 11:1865--1881.

\bibitem[Mnih et~al., 2013]{mnih2013playing}
Mnih, V., Kavukcuoglu, K., Silver, D., Graves, A., Antonoglou, I., Wierstra,
  D., and Riedmiller, M. (2013).
\newblock Playing atari with deep reinforcement learning.
\newblock {\em arXiv preprint arXiv:1312.5602}.

\bibitem[Rasmussen and Williams, 2006]{Rasmussen2005Gaussian}
Rasmussen, C.~E. and Williams, C. K.~I. (2006).
\newblock {\em Gaussian Processes for Machine Learning (Adaptive Computation
  and Machine Learning)}.
\newblock The MIT Press.

\bibitem[Rezende et~al., 2014]{rezende2014stochastic}
Rezende, D.~J., Mohamed, S., and Wierstra, D. (2014).
\newblock Stochastic backpropagation and approximate inference in deep
  generative models.
\newblock In {\em Proceedings of the 31st International Conference on Machine
  Learning (ICML-14)}, pages 1278--1286.

\bibitem[Srivastava et~al., 2014]{srivastava2014dropout}
Srivastava, N., Hinton, G., Krizhevsky, A., Sutskever, I., and Salakhutdinov,
  R. (2014).
\newblock Dropout: A simple way to prevent neural networks from overfitting.
\newblock {\em The Journal of Machine Learning Research}, 15(1):1929--1958.

\bibitem[Szegedy et~al., 2014]{szegedy2013intriguing}
Szegedy, C., Zaremba, W., Sutskever, I., Bruna, J., Erhan, D., Goodfellow, I.,
  and Fergus, R. (2014).
\newblock Intriguing properties of neural networks.
\newblock {\em ICLR}.

\bibitem[Szepesv{\'a}ri, 2010]{szepesvari2010algorithms}
Szepesv{\'a}ri, C. (2010).
\newblock Algorithms for reinforcement learning.
\newblock {\em Synthesis Lectures on Artificial Intelligence and Machine
  Learning}, 4(1):1--103.

\bibitem[Thompson, 1933]{thompson1933likelihood}
Thompson, W.~R. (1933).
\newblock On the likelihood that one unknown probability exceeds another in
  view of the evidence of two samples.
\newblock {\em Biometrika}, pages 285--294.

\bibitem[Titsias and Lawrence, 2010]{titsias2010bayesian}
Titsias, M. and Lawrence, N. (2010).
\newblock Bayesian {G}aussian process latent variable model.
\newblock {\em Thirteenth International Conference on Artificial Intelligence
  and Statistics (AISTATS)}, 6:844--851.

\bibitem[Titsias and L{\'a}zaro-Gredilla, 2014]{titsias2014doubly}
Titsias, M. and L{\'a}zaro-Gredilla, M. (2014).
\newblock Doubly stochastic variational {B}ayes for non-conjugate inference.
\newblock In {\em Proceedings of the 31st International Conference on Machine
  Learning (ICML-14)}, pages 1971--1979.

\bibitem[Tsuda et~al., 2002]{tsuda2002marginalized}
Tsuda, K., Kin, T., and Asai, K. (2002).
\newblock Marginalized kernels for biological sequences.
\newblock {\em Bioinformatics}, 18(suppl 1):S268--S275.

\bibitem[Wager et~al., 2013]{wager2013dropout}
Wager, S., Wang, S., and Liang, P.~S. (2013).
\newblock Dropout training as adaptive regularization.
\newblock In {\em Advances in Neural Information Processing Systems}, pages
  351--359.

\bibitem[Wang et~al., 2005]{wang2005gaussian}
Wang, J., Hertzmann, A., and Blei, D.~M. (2005).
\newblock Gaussian process dynamical models.
\newblock In {\em Advances in neural information processing systems}, pages
  1441--1448.

\bibitem[Williams, 1997]{williams1997computing}
Williams, C.~K. (1997).
\newblock Computing with infinite networks.
\newblock {\em Advances in neural information processing systems}, pages
  295--301.

\end{thebibliography}
\bibliographystyle{apalike}

\newpage
\appendix

\section{KL of a Mixture of Gaussians}
\begin{proposition} \label{prop:KL_mixture_of_Gaussians}
Fix $K, L \in \mathbb{N}$, a probability vector $\mathbf{p}=(p_1, ..., p_L)$, and $\bSigma_i \in \mathbb{R}^{K \times K}$ positive-definite for $i=1, ..., L$, with the elements of each $\bSigma_i$ not dependent on $K$. 
Let 
\begin{align*}
q(\x) = \sum_{i=1}^L p_i \N(\x; \bmu_i, \bSigma_i)
\end{align*}
be a mixture of Gaussians with $L$ components and $\bmu_i \in \R^K$ normally distributed, and let $p(\x) = \N(0, \I_K)$.

The KL divergence between $q(\x)$ and $p(\x)$ can be approximated as:
\begin{align*}
\KL(q(\x) || p(\x)) \approx
\sum_{i=1}^L \frac{p_i}{2} \big( \bmu_i^T \bmu_i + \tr(\bSigma_i) - K(1+\log 2 \pi) - \log | \bSigma_i | \big)
\end{align*}
plus a constant for large enough $K$.
\end{proposition}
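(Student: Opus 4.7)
\textbf{Proof plan for Proposition \ref{prop:KL_mixture_of_Gaussians}.}

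The natural starting point is the decomposition $\KL(q\|p) = -H(q) - \int q(\x)\log p(\x)\,\td\x$, where $H(q) = -\int q\log q$ is the differential entropy. The cross-entropy term is straightforward: since $p(\x) = \N(0,\I_K)$, we have $-\log p(\x) = \tfrac{K}{2}\log(2\pi) + \tfrac{1}{2}\x^T\x$, and taking the expectation under $q$ gives, by linearity and the standard Gaussian mean/covariance identities,
\[
-\int q(\x)\log p(\x)\,\td\x \;=\; \frac{K}{2}\log(2\pi) + \frac{1}{2}\sum_{i=1}^L p_i\bigl(\bmu_i^T\bmu_i + \tr(\bSigma_i)\bigr).
\]
This already accounts for the $\bmu_i^T\bmu_i + \tr(\bSigma_i)$ and $-K\log 2\pi$ parts of the target expression. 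No approximation is needed here.

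The real work is approximating $H(q)$, which has no closed form for a mixture of Gaussians. The plan is to justify the standard ``nearly disjoint components'' approximation
\[
H(q) \;\approx\; -\sum_{i=1}^L p_i\log p_i + \sum_{i=1}^L p_i\, H\bigl(\N(\bmu_i,\bSigma_i)\bigr) \;=\; -\sum_i p_i\log p_i + \sum_i \frac{p_i}{2}\bigl(K(1+\log 2\pi) + \log|\bSigma_i|\bigr),
\]
which is valid when the supports of the component densities $f_i = \N(\bmu_i,\bSigma_i)$ are essentially disjoint. The key heuristic, which the proposition's hypotheses are engineered to support, is this: because the means $\bmu_i$ are drawn from a normal distribution on $\R^K$, in high dimension they satisfy $\|\bmu_i\|^2 = \Theta(K)$ and $\bmu_i^T\bmu_j = O(\sqrt{K})$ for $i\neq j$ with high probability, while each $\bSigma_i$ has $O(1)$ entries (by assumption) so each component is concentrated within an $O(\sqrt{K})$-sized ellipsoid around its mean. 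For $K$ large enough, the pairwise separation $\|\bmu_i-\bmu_j\|$ dominates the spreads, and so for $\x$ in the effective support of $f_j$ one has $q(\x) \approx p_j f_j(\x)$, hence $\log q(\x)\approx \log p_j + \log f_j(\x)$. Plugging this into $-\int q\log q$ and partitioning the integral over the component supports yields the displayed approximation for $H(q)$.

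Combining the two pieces, the $\tfrac{K}{2}\log(2\pi)$ from the cross-entropy cancels with part of the entropy contribution, leaving
\[
\KL(q\|p) \;\approx\; \sum_{i=1}^L \frac{p_i}{2}\bigl(\bmu_i^T\bmu_i + \tr(\bSigma_i) - K(1+\log 2\pi) - \log|\bSigma_i|\bigr) \;+\; \sum_{i=1}^L p_i\log p_i,
\]
and since $\mathbf{p}$ is fixed, $\sum_i p_i\log p_i$ is the promised additive constant. I would close the argument by noting that the error terms dropped in the entropy approximation decay in $K$: the overlap integrals between $f_i$ and $f_j$ are exponentially small in $\|\bmu_i-\bmu_j\|^2$, which grows linearly in $K$ under the Gaussian-mean assumption, so the approximation tightens as $K\to\infty$.

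The main obstacle is the entropy step; everything else is routine Gaussian algebra. Making the disjoint-support argument fully rigorous would require a concentration bound on pairwise inner products of the random means, plus a uniform bound on the overlap integrals $\int f_i f_j$ in terms of $\|\bmu_i-\bmu_j\|$ and the operator norms of the $\bSigma_i$. For the appendix of this paper, it is sufficient to present the heuristic in the form above and observe that the neglected cross-terms vanish as $K$ grows (which is exactly the regime the main derivation uses, since $K$ is the number of hidden units).
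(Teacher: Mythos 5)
Your proposal is correct and follows essentially the same route as the paper's proof: the same decomposition $\KL(q\|p) = -\mathcal{H}(q) - \int q \log p$, the same exact evaluation of the cross-entropy term, and the same ``well-separated components'' approximation of the mixture entropy justified by the means being normally distributed in $\R^K$ so that pairwise separations grow like $K$ while the component spreads stay $\mathcal{O}(\sqrt{K})$ (the paper phrases this via the expectation of a generalised $\chi^2$ with $K$ degrees of freedom). The only difference is cosmetic bookkeeping of the additive constants ($\sum_i p_i \log p_i$ and the $\tfrac{K}{2}\log 2\pi$ term), which both you and the paper absorb into the ``plus a constant.''
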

\begin{proof}
We have
\begin{align*}
\KL(q(\x) || p(\x)) &= \int q(\x) \log \frac{q(\x)}{p(\x)} \td \x \\
&= \int q(\x) \log q(\x) \td \x - \int q(\x) \log p(\x) \td \x \\
&= - \mathcal{H}(q(\x)) - \int q(\x) \log p(\x) \td \x
\end{align*}
where $\mathcal{H}(q(\x))$ is the entropy of $q(\x)$. The second term in the last line can be evaluated analytically, but the entropy term has to be approximated.

We begin by approximating the entropy term. We write
\begin{align*}
\mathcal{H}(q(\x)) &= - \sum_{i=1}^L p_i \int \N(\x; \bmu_i, \bSigma_i) \log q(\x) \td \x \\
&= - \sum_{i=1}^L p_i \int \N(\bepsilon_i; 0, \I) \log q(\bmu_i + \bL_i \bepsilon_i) \td \bepsilon_i
\end{align*}
with $\bL_i \bL_i^T = \bSigma_i$.

Now, the term inside the logarithm can be written as 
\begin{align*}
q(\bmu_i + \bL_i \bepsilon_{i}) &= \sum_{j=1}^L p_i \N(\bmu_i + \bL_i \bepsilon_{i}; \bmu_j, \bSigma_j) \\
&= \sum_{j=1}^L p_i (2 \pi)^{-K/2} |\bSigma_j|^{-1/2} \exp \big\{ -\frac{1}{2} || \bmu_j - \bmu_i - \bL_i \bepsilon_{i} ||^2_{\bSigma_j} \big\}.
\end{align*}
where $|| \cdot ||_{\bSigma}$ is the Mahalanobis distance.
Since $\bmu_i,\bmu_j$ are assumed to be normally distributed, the quantity $\bmu_j - \bmu_i - \bL_i \bepsilon_{i}$ is also normally distributed. Using the expectation of the generalised $\chi^2$ distribution with $K$ degrees of freedom, we have that for $K >> 0$ there exists that $|| \bmu_j - \bmu_i - \bL_i \bepsilon_{i} ||^2_{\bSigma_j} >> 0$ for $i \neq j$ (since the elements of $\bSigma_j$ do not depend on $K$).
Finally, we have for $i = j$ that $|| \bmu_i - \bmu_i - \bL_i \bepsilon_{i} ||^2_{\bSigma_i} = \bepsilon_{i}^T \bL_i^T \bL_i^{-T} \bL^{-1}_i \bL_i \bepsilon_{i} = \bepsilon_{i}^T \bepsilon_{i}$. Therefore the last equation can be approximated as
\begin{align*}
&q(\bmu_i + \bL_i \bepsilon_{i}) 
\approx p_i (2 \pi)^{-K/2} |\bSigma_i|^{-1/2} \exp \big\{ -\frac{1}{2} \bepsilon_{i}^T \bepsilon_{i} \big\}.
\end{align*}
This gives us 
\begin{align*}
\mathcal{H}(q(\x)) &\approx- \sum_{i=1}^L p_i \int \N(\bepsilon_i; 0, \I)\log \bigg( p_i (2 \pi)^{-K/2} |\bSigma_i|^{-1/2} \exp \big\{ -\frac{1}{2} \bepsilon_{i}^T \bepsilon_{i} \big\} \bigg) \td \bepsilon_i \\
&= \sum_{i=1}^L \frac{p_i}{2} \bigg( \log |\bSigma_i| + \int \N(\bepsilon_i; 0, \I) \bepsilon_{i}^T \bepsilon_{i} \td \bepsilon_i + K \log 2 \pi \bigg)
+ C
\end{align*}
where $C = - \sum_{i=1}^L p_i \log p_i$.
Since $\bepsilon_{i}^T \bepsilon_{i}$ distributes according to a $\chi^2$ distribution, its expectation is $K$, and the entropy can be approximated as 
\begin{align*}
\mathcal{H}(q(\x)) \approx \sum_{i=1}^L \frac{p_i}{2} \big( \log |\bSigma_i| + K (1+\log 2 \pi) \big) + C
\end{align*}

Next, evaluating the first term of the KL divergence we get
\begin{align*}
\int q(\x) \log p(\x) \td \x &= \sum_{i=1}^L p_i \int  \N(\x; \bmu_i, \bSigma_i) \log p(\x) \td \x
\end{align*}
for $p(\x) = \N(0, \I_K)$ it is easy to validate that this is equivalent to
$-\frac{1}{2} \sum_{i=1}^L p_i \big( \bmu_i^T \bmu_i + \tr ( \bSigma_i ) \big)$.

Finally, we get 
\begin{align*}
\KL(q(\x) || p(\x)) \approx
\sum_{i=1}^L \frac{p_i}{2} \big( \bmu_i^T \bmu_i + \tr(\bSigma_i) - K(1+\log 2 \pi) - \log | \bSigma_i | \big) - C.
\end{align*}
\end{proof}

\section{Log Evidence Lower Bound for Classification}
\begin{proposition} \label{prop:lower_bound_class}
Given 
\begin{align*}
p(\bc | \X) &= \int p(\bc | \Y) p(\Y | \X) \td \Y \\
&= \int p(\bc | \Y) \bigg( \int p(\Y | \X, \W_1, \W_2, \Bb) \\
&\qquad\qquad\qquad\quad \cdot p(\W_1, \W_2, \Bb) \td \W_1 \td \W_2 \td \Bb \bigg) \td \Y
\end{align*}
where $\bc$ is an $N$ dimensional vector of categorical values, one for each observation, we can write the log evidence lower bound as
\begin{align*}
\hspace{-1mm}
\cL_{\text{GP-VI}} := &\int p(\Y|\X, \W_1, \W_2, \Bb) q(\W_1, \W_2, \Bb) \notag\\
&\qquad \qquad \cdot \log p(\bc | \Y) \td \W_1 \td \W_2 \td \Bb \td \Y  \notag\\
&\quad - \KL(q(\W_1, \W_2, \Bb) || p(\W_1, \W_2, \Bb)).
\end{align*}
\end{proposition}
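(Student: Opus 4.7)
The plan is to derive the stated lower bound by applying Jensen's inequality twice to $\log p(\bc \svert \X)$: first to introduce the variational distribution $q(\W_1,\W_2,\Bb)$ over the weights (yielding the usual ELBO with a KL term), and then a second time to move the $\log$ inside the integral over $\Y$. The result is a doubly-stochastic bound whose integrand separates into a log-likelihood term that the reparameterisation of Section 3 can push through, plus the familiar KL regulariser.

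First I would rewrite the marginal likelihood by grouping the weight variables with the variational density. Multiplying and dividing the integrand by $q(\W_1,\W_2,\Bb)$ and using concavity of $\log$ (Jensen) yields
\[
\log p(\bc\svert \X) \;\geq\; \int q(\W_1,\W_2,\Bb)\,\log\Bigl( p(\bc\svert \X,\W_1,\W_2,\Bb)\,\tfrac{p(\W_1,\W_2,\Bb)}{q(\W_1,\W_2,\Bb)} \Bigr)\,\td\W_1\,\td\W_2\,\td\Bb,
\]
where $p(\bc\svert \X,\W_1,\W_2,\Bb) := \int p(\bc\svert \Y)\,p(\Y\svert \X,\W_1,\W_2,\Bb)\,\td\Y$. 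Splitting the logarithm gives an expected log-conditional-likelihood term minus $\KL(q(\W_1,\W_2,\Bb) \,\|\, p(\W_1,\W_2,\Bb))$, which is exactly the standard ELBO derivation.

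Next I would lower-bound the inner $\log p(\bc\svert \X,\W_1,\W_2,\Bb)$ by Jensen applied to the integral over $\Y$ against the conditional density $p(\Y\svert \X,\W_1,\W_2,\Bb)$:
\[
\log \int p(\bc\svert \Y)\,p(\Y\svert \X,\W_1,\W_2,\Bb)\,\td\Y \;\geq\; \int p(\Y\svert \X,\W_1,\W_2,\Bb)\,\log p(\bc\svert \Y)\,\td\Y.
\]
Substituting this into the previous display and noting that $q(\W_1,\W_2,\Bb)$ does not depend on $\Y$, the inner expectation can be pulled under the outer $q$-expectation. Collecting terms yields precisely the stated expression for $\cL_{\text{GP-VI}}$.

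There is no real technical obstacle: both steps are textbook Jensen arguments. The only thing worth flagging is that the bound is loose in two separate places (once from variational approximation of the posterior over weights, once from moving $\log$ past the likelihood integral over $\Y$), so strictly speaking the object constructed is a valid lower bound but not the tightest one available; the argument relies on this double use of Jensen being acceptable because the resulting integrand factorises nicely and is amenable to Monte Carlo estimation with the reparameterisation of Eq.~\eqref{eq:reparam_reg}, which is how the bound is subsequently used in Section~\ref{sect:classification}.
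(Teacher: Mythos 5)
Your proposal is correct and arrives at exactly the bound in the paper; the only difference is presentational, since the paper applies Jensen's inequality once with respect to the joint proposal density $q(\W_1,\W_2,\Bb)\,p(\Y\svert\X,\W_1,\W_2,\Bb)$ over $(\W_1,\W_2,\Bb,\Y)$, which is precisely the composition of your two nested Jensen steps. Your remark that the bound is therefore loose in two distinct places is accurate and is not made explicit in the paper's one-step derivation.
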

\begin{proof}
We have 
\begin{align*}
&\log p(\bc | \X) \\
&= \log \int p(\bc | \Y) p(\Y | \X, \W_1, \W_2, \Bb) \\
&\qquad\qquad\qquad\quad \cdot p(\W_1, \W_2, \Bb) \td \W_1 \td \W_2 \td \Bb \td \Y \\
&= \log \int q(\W_1, \W_2, \Bb) p(\Y | \X, \W_1, \W_2, \Bb) p(\bc | \Y) \\
&\qquad\qquad\qquad\quad \cdot \frac{p(\W_1, \W_2, \Bb)}{q(\W_1, \W_2, \Bb)} \td \W_1 \td \W_2 \td \Bb \td \Y \\
&\geq \int q(\W_1, \W_2, \Bb) p(\Y | \X, \W_1, \W_2, \Bb) \log \bigg( p(\bc | \Y) \\
&\qquad\qquad\qquad\quad \cdot \frac{p(\W_1, \W_2, \Bb)}{q(\W_1, \W_2, \Bb)} \bigg) \td \W_1 \td \W_2 \td \Bb \td \Y \\
&= \int q(\W_1, \W_2, \Bb) p(\Y|\X, \W_1, \W_2, \Bb) \notag\\
&\qquad \qquad \cdot \log p(\bc | \Y) \td \W_1 \td \W_2 \td \Bb \td \Y  \notag\\
&\quad - \KL(q(\W_1, \W_2, \Bb) || p(\W_1, \W_2, \Bb)),
\end{align*}
as needed.
\end{proof}

\section{Predictive Mean}
\begin{proposition} \label{prop:mean}
Given weights matrices $\M_i$ of dimensions $K_i \times K_{i-1}$, bias vectors $\m_i$ of dimensions $K_i$, and binary vectors $\sBb_i$ of dimensions $K_{i-1}$ for each layer $i = 1, ..., L$, as well as the approximating variational distribution
\begin{align*}
q(\y^* | \x^*) := \N \big( \y^*; \widehat{\y}^*(\x^*, \sBb_1, ..., \sBb_L), \tau^{-1} \I_D \big) \text{Bern}(\sBb_1) \cdots \text{Bern}(\sBb_L)
\end{align*}
for some $\tau > 0$, with 
\begin{align*}
\widehat{\y}^* = \sqrt{\frac{1}{K_L}} (\M_L \sBb_L) \sigma \bigg( ... \sqrt{\frac{1}{K_1}} (\M_2 \sBb_2) \sigma \big( (\M_1 \sBb_1) \x^* + \m_1 \big) ... \bigg),
\end{align*}
we have 
\begin{align*}
\mathbb{E}_{q(\y^* | \x^*)} (\y^*) \approx \frac{1}{T} \sum_{t=1}^T \widehat{\y}^*(\x^*, \widehat{\sBb}_{1,t}, ..., \widehat{\sBb}_{L,t})
\end{align*}
with 
\begin{align*}
\widehat{\sBb}_{i,t} \sim \text{Bern}(p_i).
\end{align*}
\end{proposition}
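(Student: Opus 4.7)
The plan is to reduce the claimed approximation to two elementary facts: (i) the law of total expectation applied to the factorised variational distribution, and (ii) the standard Monte Carlo estimator for an expectation over a product of Bernoullis. Both steps are conceptually routine; the only thing to be careful about is book-keeping of which variables are being integrated at each stage.

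First I would write the expectation under $q(\y^* | \x^*)$ explicitly as an integral over $\y^*$ and a sum over the binary vectors $\sBb_1, \dots, \sBb_L$, using the fact that $q$ factorises as a product of Bernoulli masses and a Gaussian density conditional on those masses. I would then invoke Fubini to carry out the $\y^*$ integral on the inside. By the definition of $q(\y^* | \x^*)$, conditional on $\sBb_1, \dots, \sBb_L$ the vector $\y^*$ is distributed as $\N(\widehat{\y}^*(\x^*, \sBb_1, \dots, \sBb_L), \tau^{-1} \I_D)$, so its conditional mean is exactly $\widehat{\y}^*(\x^*, \sBb_1, \dots, \sBb_L)$. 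This collapses the full expectation to
\[
\mathbb{E}_{q(\y^* | \x^*)}(\y^*) = \sum_{\sBb_1, \dots, \sBb_L} \text{Bern}(\sBb_1) \cdots \text{Bern}(\sBb_L)\, \widehat{\y}^*(\x^*, \sBb_1, \dots, \sBb_L).
\]

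Next I would recognise the right-hand side as an expectation of the deterministic function $\widehat{\y}^*(\x^*, \cdot)$ with respect to independent Bernoullis $\sBb_i \sim \text{Bern}(p_i)$. A standard unbiased Monte Carlo estimator of this expectation, with $T$ i.i.d.\ samples $\widehat{\sBb}_{i,t} \sim \text{Bern}(p_i)$ drawn independently across $i$ and $t$, is $\frac{1}{T} \sum_{t=1}^T \widehat{\y}^*(\x^*, \widehat{\sBb}_{1,t}, \dots, \widehat{\sBb}_{L,t})$. By the strong law of large numbers this converges almost surely to the exact expectation as $T \to \infty$, which justifies the approximation symbol in the statement.

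The ``hard'' part is really just ensuring the factorisation of $q$ is used cleanly, so that the conditional-Gaussian mean argument is valid and no cross-terms between the Bernoulli and Gaussian factors are missed; once the joint expectation is written out explicitly, everything else is standard. I would close by remarking that no assumption on $\tau$ beyond $\tau > 0$ is needed, since the Gaussian observation noise affects the variance of the predictive distribution but not its mean.
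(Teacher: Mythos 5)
Your proposal is correct and follows essentially the same route as the paper's proof: write the expectation as an iterated integral, collapse the inner Gaussian integral to its conditional mean $\widehat{\y}^*(\x^*, \sBb_1, \dots, \sBb_L)$, and approximate the remaining expectation over the Bernoulli variables by Monte Carlo sampling. The only differences are cosmetic (sums versus integrals over the binary masks, and your added law-of-large-numbers remark justifying the $\approx$).
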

\begin{proof}
\begin{align*}
\mathbb{E}_{q(\y^* | \x^*)} (\y^*) &= \int \y^* q(\y^* | \x^*) \td \y^* \\
&= \int \y^* \N \big( \y^*; \widehat{\y}^*(\x^*, \sBb_1, ..., \sBb_L), \tau^{-1} \I_D \big) \text{Bern}(\sBb_1) \cdots \text{Bern}(\sBb_L) \td \sBb_1 \cdots \td \sBb_L \td \y^* \\
&= \int \bigg( \int \y^* \N \big( \y^*; \widehat{\y}^*(\x^*, \sBb_1, ..., \sBb_L), \tau^{-1} \I_D \big) \td \y^* \bigg) \text{Bern}(\sBb_1) \cdots \text{Bern}(\sBb_L) \td \sBb_1 \cdots \td \sBb_L \td \y^* \\
&= \int \widehat{\y}^*(\x^*, \sBb_1, ..., \sBb_L) \text{Bern}(\sBb_1) \cdots \text{Bern}(\sBb_L) \td \sBb_1 \cdots \td \sBb_L \\
&\approx \frac{1}{T} \sum_{t=1}^T \widehat{\y}^*(\x^*, \widehat{\sBb}_{1,t}, ..., \widehat{\sBb}_{L,t}).
\end{align*}
\end{proof}

\section{Predictive Variance}
\begin{proposition} \label{prop:var}
Given weights matrices $\M_i$ of dimensions $K_i \times K_{i-1}$, bias vectors $\m_i$ of dimensions $K_i$, and binary vectors $\sBb_i$ of dimensions $K_{i-1}$ for each layer $i = 1, ..., L$, as well as the approximating variational distribution
\begin{align*}
q(\y^* | \x^*) &:= p(\y^* | \x^*, \bo) q(\bo) \\
q(\bo) &= \text{Bern}(\sBb_1) \cdots \text{Bern}(\sBb_L) \\
p(\y^* | \x^*, \bo) &= \N \big( \y^*; \widehat{\y}^*(\x^*, \sBb_1, ..., \sBb_L), \tau^{-1} \I_D \big) 
\end{align*}
for some $\tau > 0$, with 
\begin{align*}
\widehat{\y}^* = \sqrt{\frac{1}{K_L}} (\M_L \sBb_L) \sigma \bigg( ... \sqrt{\frac{1}{K_1}} (\M_2 \sBb_2) \sigma \big( (\M_1 \sBb_1) \x^* + \m_1 \big) ... \bigg),
\end{align*}
we have 
\begin{align*}
\mathbb{E}_{q(\y^* | \x^*)} \big( (\y^*)^T(\y^*) \big) \approx \tau^{-1} \I_D + \frac{1}{T} \sum_{t=1}^T \widehat{\y}^*(\x^*, \widehat{\sBb}_{1,t}, ..., \widehat{\sBb}_{L,t})^T \widehat{\y}^*(\x^*, \widehat{\sBb}_{1,t}, ..., \widehat{\sBb}_{L,t})
\end{align*}
with 
\begin{align*}
\widehat{\sBb}_{i,t} \sim \text{Bern}(p_i).
\end{align*}
\end{proposition}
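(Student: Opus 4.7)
The plan is to mimic the derivation in Proposition \ref{prop:mean}, but replace the first moment of the Gaussian $p(\y^* | \x^*, \bo)$ with its second moment (outer product). Since the only randomness beyond the Bernoulli variables is the zero-mean Gaussian noise of variance $\tau^{-1}\I_D$, this extra moment contribution will give rise to the additive $\tau^{-1}\I_D$ term, and the Monte Carlo step will handle the remaining $\widehat{\y}^{*T}\widehat{\y}^*$ piece.

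First, I would write the expectation as the iterated integral
\[
\mathbb{E}_{q(\y^* | \x^*)}\bigl((\y^*)^T \y^*\bigr)
= \int \biggl( \int (\y^*)^T \y^*\, \N\bigl(\y^*;\widehat{\y}^*(\x^*,\sBb_1,\dots,\sBb_L),\tau^{-1}\I_D\bigr)\,\td \y^* \biggr)\,
\text{Bern}(\sBb_1)\cdots\text{Bern}(\sBb_L)\,\td\sBb_1\cdots\td\sBb_L,
\]
legitimate by Fubini since everything is nonnegative/integrable. Next I would evaluate the inner integral. Here $\y^*$ is a $1\times D$ row vector, so $(\y^*)^T\y^*$ is the $D\times D$ outer product, and the standard identity for the second moment of a multivariate Gaussian gives
\[
\int (\y^*)^T \y^*\, \N\bigl(\y^*;\widehat{\y}^*,\tau^{-1}\I_D\bigr)\,\td\y^* = \tau^{-1}\I_D + (\widehat{\y}^*)^T\widehat{\y}^*,
\]
where $\widehat{\y}^*$ depends on $\sBb_1,\dots,\sBb_L$ but not on $\y^*$.

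Substituting this back, and noting that $\tau^{-1}\I_D$ is constant in $\sBb_1,\dots,\sBb_L$ while $\text{Bern}(\sBb_1)\cdots\text{Bern}(\sBb_L)$ integrates to $1$, the expectation collapses to
\[
\tau^{-1}\I_D + \int \widehat{\y}^*(\x^*,\sBb_1,\dots,\sBb_L)^T\,\widehat{\y}^*(\x^*,\sBb_1,\dots,\sBb_L)\,
\text{Bern}(\sBb_1)\cdots\text{Bern}(\sBb_L)\,\td\sBb_1\cdots\td\sBb_L.
\]
Finally I would approximate the remaining integral over the Bernoulli variables by Monte Carlo sampling: drawing $\widehat{\sBb}_{i,t} \sim \text{Bern}(p_i)$ for $i=1,\dots,L$ and $t=1,\dots,T$ yields the unbiased estimator $\frac{1}{T}\sum_{t=1}^T \widehat{\y}^*(\x^*,\widehat{\sBb}_{1,t},\dots,\widehat{\sBb}_{L,t})^T \widehat{\y}^*(\x^*,\widehat{\sBb}_{1,t},\dots,\widehat{\sBb}_{L,t})$, completing the claim.

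There is no genuine obstacle: the argument is essentially a one-line Gaussian second-moment identity followed by the same MC-integration step used for the predictive mean. The only point worth being careful about is bookkeeping of shapes --- ensuring that $\y^*$ is interpreted as a row vector so that $(\y^*)^T\y^*$ is the $D\times D$ outer product matching the $\tau^{-1}\I_D$ on the right-hand side --- and noting that because the Gaussian covariance is independent of $\bo$, the additive noise term factors out cleanly and is not subject to Monte Carlo approximation.
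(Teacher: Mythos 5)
Your proposal is correct and follows essentially the same route as the paper's own proof: write the expectation as an iterated integral, apply the Gaussian second-moment identity $\int (\y^*)^T\y^*\,\N(\y^*;\widehat{\y}^*,\tau^{-1}\I_D)\,\td\y^* = \tau^{-1}\I_D + (\widehat{\y}^*)^T\widehat{\y}^*$ to the inner integral, pull the constant $\tau^{-1}\I_D$ out of the Bernoulli integral, and Monte Carlo the rest. The only cosmetic difference is that the paper writes the inner step as $\Cov_{p(\y^*|\x^*,\bo)}(\y^*) + \mathbb{E}(\y^*)^T\mathbb{E}(\y^*)$ rather than invoking the identity directly, which is the same thing.
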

\begin{proof}
\begin{align*}
&\mathbb{E}_{q(\y^* | \x^*)} \big( (\y^*)^T(\y^*) \big) \\
&= \int \bigg( \int (\y^*)^T(\y^*) p(\y^* | \x^*, \bo) \td \y^* \bigg) q(\bo) \td \bo \\ 
&= \int \bigg( \Cov_{p(\y^* | \x^*, \bo)}(\y^*) + \mathbb{E}_{p(\y^* | \x^*, \bo)} (\y^*)^T \mathbb{E}_{p(\y^* | \x^*, \bo)} (\y^*) \bigg) q(\bo) \td \bo \\ 
&= \int \bigg( \tau^{-1} \I_D + \widehat{\y}^*(\x^*, \sBb_1, ..., \sBb_L)^T \widehat{\y}^*(\x^*, \sBb_1, ..., \sBb_L) \bigg) \text{Bern}(\sBb_1) \cdots \text{Bern}(\sBb_L) \td \sBb_1 \cdots \td \sBb_L \\
&\approx \tau^{-1} \I_D + \frac{1}{T} \sum_{t=1}^T \widehat{\y}^*(\x^*, \widehat{\sBb}_{1,t}, ..., \widehat{\sBb}_{L,t})^T \widehat{\y}^*(\x^*, \widehat{\sBb}_{1,t}, ..., \widehat{\sBb}_{L,t})
\end{align*}
since $p(\y^* | \x^*, \bo) = \N \big( \y^*; \widehat{\y}^*(\x^*, \sBb_1, ..., \sBb_L), \tau^{-1} \I_D \big)$.
\end{proof}

\section{Detailed Experiment Set-up}

\subsection{Model Uncertainty in Regression Tasks -- Extrapolation}

We ran a stochastic gradient descent optimiser for 1,000,000 iterations (until convergence) with learning rate policy $\text{base-lr} * (1 + \gamma * \text{iter})^{-p}$ with $\gamma=0.0001, p=0.25$ and momentum $0.9$. We initialise the bias at 0 and initialise the weights uniformly from $[-\sqrt{3 / \text{fan-in}},\sqrt{3 / \text{fan-in}}]$. We use no mini-batch optimisation as the data is fairly small and with high frequencies. The learning rates used are $0.01$ with weight decay of $1e^{-06}$ for CO$_2$ (corresponding to a high noise precision of $1e^{5}$). This is to model the low observation noise in the data due to the scaling and high frequencies.

\subsection{Model Uncertainty in Reinforcement Learning}

For the purpose of this experiment, we used future rewards discount of $0.7$, no temporal window, and an experience replay of 30,000. The network starts learning after 1,000 steps, where in the initial 5,000 steps random actions are performed.
The networks consist of two ReLU hidden layers of size 50, with a learning rate and weight decay of 0.001. Stochastic gradient descent was used with no momentum and batch size of 64.

The original implementation makes use of epsilon greedy exploration with epsilon changing as 
$$
\epsilon = \min \left(1, \max \left(\epsilon_{min}, 1 - \frac{\text{age} - \text{burn-in}}{\text{steps-total} - \text{burn-in}} \right) \right)
$$
with steps-total of 200,000, burn-in of 3,000, and $\epsilon_{min}=0.05$. 

\subsection{Model Uncertainty in Regression Tasks -- Interpolation}

For interpolation we repeat the experiment in the main paper with ReLU networks with 5 hidden layers and the same setup on a new dataset -- solar irradiance. We use base learning rate of $5e^{-3}$ and weight decay of $5e^{-7}$. 

\begin{figure}[h]
%\vspace{-7mm}
	\centering
	\begin{subfigure}[b]{\textwidth}
		\includegraphics[width=\linewidth, height=2cm, trim=4mm 3mm 2mm 2mm, clip]{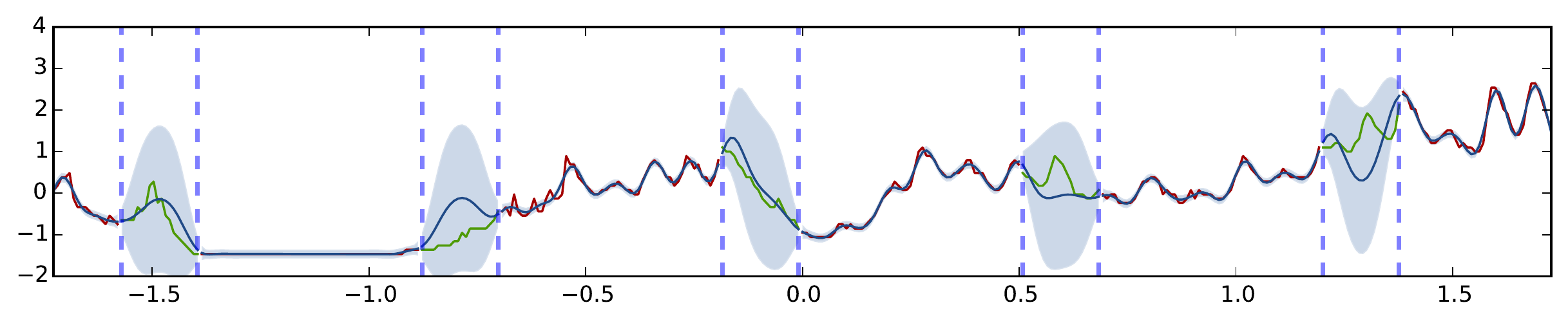}
		\vspace{-5mm}
		\caption{Gaussian process with SE covariance function} \label{fig:inter_GP}
	\end{subfigure}
	\begin{subfigure}[b]{\textwidth}
		\includegraphics[width=\linewidth, height=2cm, trim=4mm 3mm 2mm 2mm, clip]{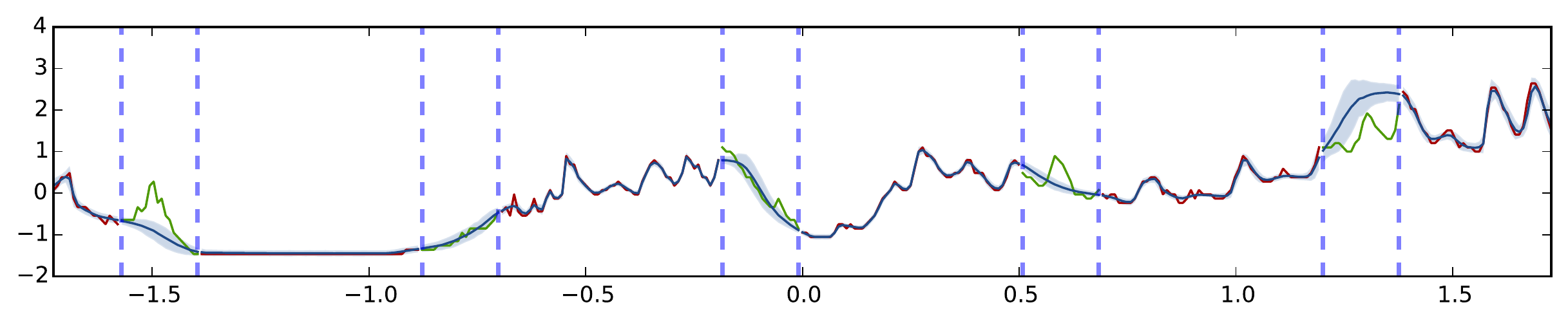}
		\vspace{-5mm}
		\caption{MC dropout with ReLU non-linearities} \label{fig:inter_MC}
	\end{subfigure}
	\caption{\textbf{Predictive mean and uncertainties on the reconstructed solar irradiance dataset with missing segments, for the GP and MC dropout approximation.} In red is the observed function and in green are the missing segments. In blue is the predictive mean plus/minus two standard deviations of the various approximations.} \label{fig:inter}
	%\vspace{-6mm}
\end{figure}

Interpolation results are shown in fig.\ \ref{fig:inter}. Fig.\ \ref{fig:inter_GP} shows interpolation of missing sections (bounded between pairs of dashed blue lines) for the Gaussian process with squared exponential covariance function, as well the function value on the training set. In red is the observed function, in green are the missing sections, and in blue is the model predictive mean. Fig.\ \ref{fig:inter_MC} shows the same for the ReLU dropout model with 5 layers. 

Both models interpolate the data well, with increased uncertainty over the missing segments. However the GP's uncertainty is larger and over-estimates its 95\% confidence interval in capturing the true function. The observed uncertainty in MC dropout is similar to that of \citep{Gal2015Improving} with the model over-confident in its predictions.
%This was observed with \red{Gal and Turner} as well, where the predictive uncertainty increases further than the data compared to the GP.
Note that this is often observed with variational techniques, where model uncertainty is under-estimated.
This is because all variational inference techniques would under-estimate model uncertainty unless the true posterior is in the class of approximating variational distributions. Note also that the models correspond to different GP covariance functions that would result in different variance estimations.
 %, thus giving us only the best case model confidence (i.e.\ when the model is ). 

\end{document}